\title{Truncated Inference for Latent Variable Optimization Problems: Application to Robust Estimation and Learning}
\author{%
  Christopher Zach\thanks{This work was partially supported by the Wallenberg AI, Autonomous Systems and Software Program (WASP) funded by the Knut and Alice Wallenberg Foundation.
} \qquad Huu Le
  \\
%  Department of Electrical Engineering \\
  Chalmers University of Technology\\
  Gothenburg, Sweden \\
  \texttt{\{zach,huul\}@chalmers.se}
}
\theoremstyle{plain}
\newtheorem{proposition}{Proposition}
\newtheorem{lemma}{Lemma}
\theoremstyle{definition}
\newtheorem{remark}{Remark}
\providecommand{\norm}[1]{\lVert#1\rVert}
\providecommand{\Norm}[1]{\left\lVert#1\right\rVert}
\def\v#1{\ensuremath{\mathbf{#1}}}
\def\vu{\v u}
\def\cF{\ensuremath{\mathcal{F}}}
\providecommand{\lb}[1]{\underline{#1}}
\providecommand{\ub}[1]{\overline{#1}}
\providecommand{\EE}[2]{\ensuremath{\mathbb{E}_{#1}\left[ #2 \right]}}
\begin{document}

\maketitle

\begin{abstract}
  Optimization problems with an auxiliary latent variable structure in
  addition to the main model parameters occur frequently in computer vision
  and machine learning. The additional latent variables make the underlying
  optimization task expensive, either in terms of memory (by maintaining the
  latent variables), or in terms of runtime (repeated exact inference of
  latent variables). We aim to remove the need to maintain the latent
  variables and propose two formally justified methods, that dynamically adapt
  the required accuracy of latent variable inference. These methods have
  applications in large scale robust estimation and in learning energy-based
  models from labeled data.

\end{abstract}

\section{Introduction}
\label{sec:intro}

In this work we are interested in optimization problems that involve
additional latent variables and therefore have the general form,
\begin{align}
  \min_\theta \min_{\ub{\vu}} \ub{J}(\theta, \ub{\vu}) =: \min_\theta J(\theta),
  \label{eq:initial_objective}
\end{align}
where $\theta$ are the main parameters of interest and $\ub{\vu}$ denote the
complete set of latent variables. By construction
$\ub{J}(\theta, \ub{\vu})$ is always an upper bound to the ``ideal'' objective
$J$. In typical computer vision and machine learning settings the objective
function in Eq.~\ref{eq:initial_objective} has a more explicit structure as
follows,
\begin{align}
  \ub{J}(\theta, \ub{\vu}) = \frac{1}{N} \sum\nolimits_{i=1}^N \ub{J}_i(\theta, \ub{u}_i),
  \label{eq:main_objective}
\end{align}
where the index $i$ ranges over e.g.\ training samples or over observed
measurements.  Each $\ub{u}_i$ corresponds to the inferred (optimized) latent
variable for each term, and $\ub{\vu}$ is the entire collection of latent
variables, i.e.\ $\ub{\vu}=(\ub{u}_1,\dotsc,\ub{u}_N)$.  Examples for this
problem class are models for (structured) prediction with latent
variables~\cite{felzenszwalb2009object,yu2009learning}, supervised learning of
energy-based models~\cite{movellan1991contrastive,xie2003equivalence} (in both
scenarios $N$ labeled training samples are provided), and robust estimation
using explicit confidence weights~\cite{geman1992constrained,zach2014robust}
(where $N$ corresponds to the number of sensor measurements).

\begin{figure}[tb]
  \centering
  \subfigure[Relaxed generalized MM]{\includegraphics[width=0.45\textwidth]{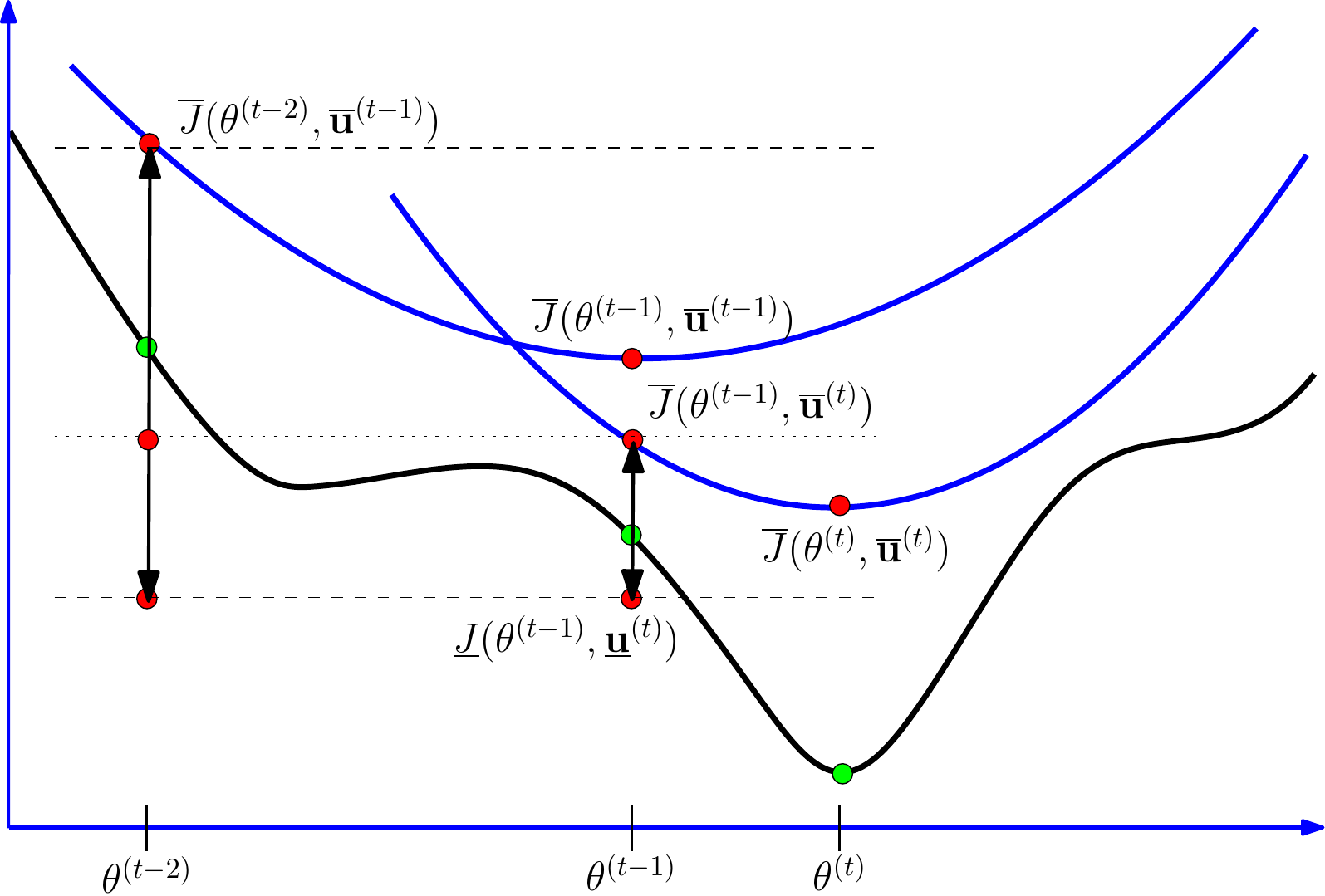}}~~
  \subfigure[Sufficient descent MM]{\includegraphics[width=0.45\textwidth]{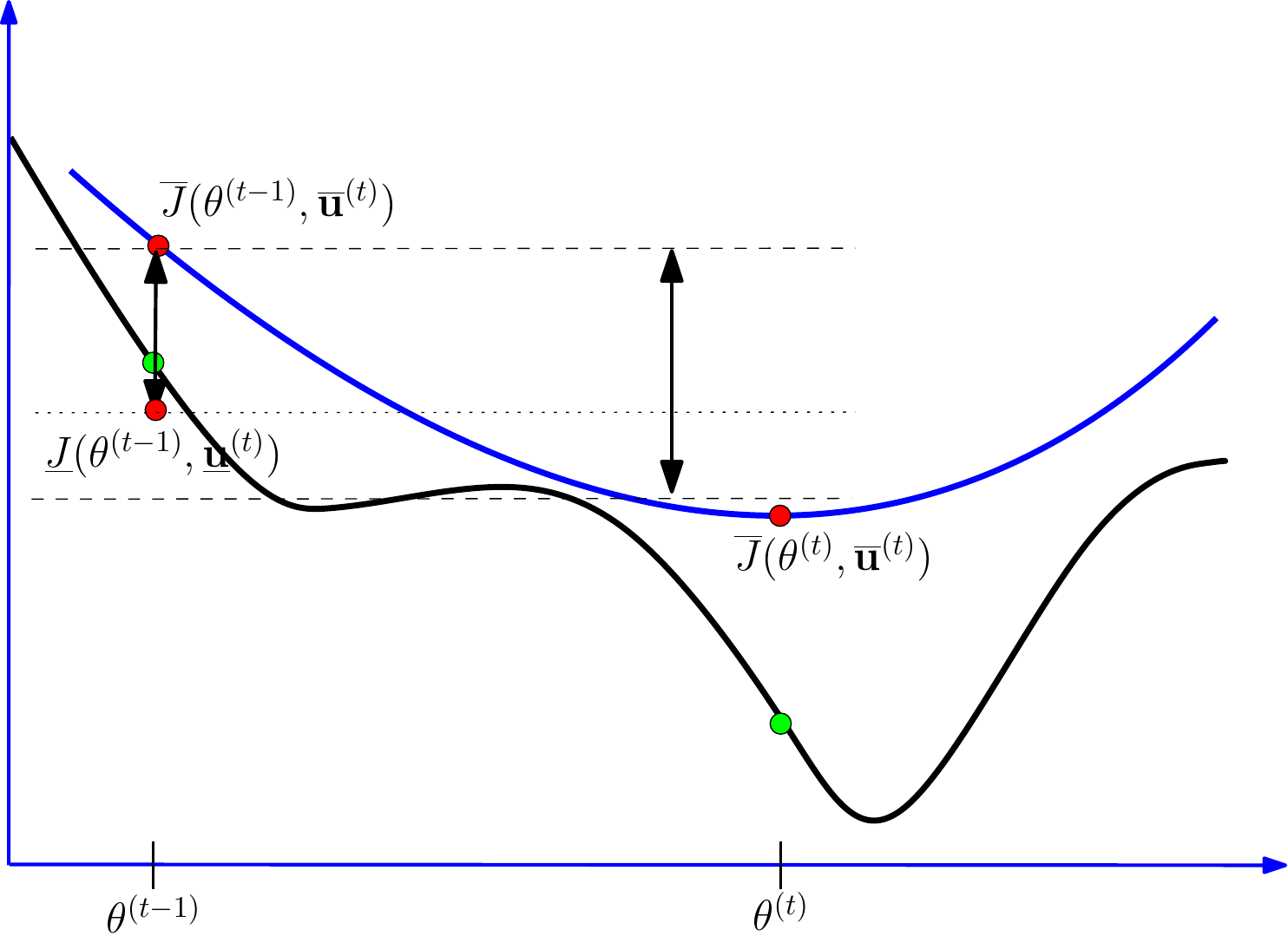}}
  \caption{Illustration of the principle behind our proposed
    majorization-minimization variants. Left: relaxed generalized MM requires
    that the current duality gap at $\theta^{(t-1)}$ (between dotted and lower
    dashed lines) is at most a given fraction of the gap induced by the
    previous upper bound (between dashed lines). Right: sufficient descent MM
    requires that the current duality gap (between upper dashed and dotted
    lines) is at most a given fraction of a guaranteed decrease (between
    dashed lines).  }
  \label{fig:teaser}
\end{figure}

We focus on the setting when $N$ is very large, and maintaining the values of
$\ub{u}_i$ for all $N$ terms in memory is intractable. In particular, storing
the entire vector $\ub{\vu}$ is undesirable when the dimensionality of each
$\ub{u}_i$ is large. In one of our applications $\ub{u}_i$ represents the
entire set of unit activations in a deep neural network, and therefore
$\ub{u}_i$ is high-dimensional in such cases.

Observe that neither $J(\theta)$ nor $\nabla J(\theta)$ are easy to evaluate
directly.  By using a variable projection approach, the loss $J$ in
Eq.~\ref{eq:main_objective} can in principle be optimized using a
``state-less'' gradient method,
\begin{align}
  \nabla J(\theta) = \frac{1}{N} \sum\nolimits_{i=1}^N \nabla_\theta \ub{J}_i(\theta; \ub{u}_i^*(\theta))
  \label{eq:gradient_step}
\end{align}
where
$\ub{u}_i^*(\theta)=\arg\min_{\ub{u}_i} \ub{J}_i(\theta; \ub{u}_i)$. Usually
determining $\ub{u}_i^*(\theta)$ requires itself an iterative minimization
method, hence exactly solving
$\arg\min_{\ub{u}_i} \ub{J}_i(\theta; \ub{u}_i)$ renders the computation of
$\nabla J(\theta)$ expensive in terms of run-time (e.g.\ it requires solving a
quadratic program in the application presented in Section~\ref{sec:CHL}). On
the other hand, by using Eq.~\ref{eq:gradient_step} there is no need to
explicitly keep track of the values $\ub{u}_i^*(\theta)$ (as long as
determining the minimizer $\ub{u}_i^*(\theta)$ is ``cold-started'', i.e.\ run
from scratch). Note that Eq.~\ref{eq:gradient_step} is only correct for
stationary points $\ub{u}_i^*(\theta)$. For inexact minimizers
$\ub{u}_i'(\theta) \approx \ub{u}_i^*(\theta)$
%(e.g.\ obtained by truncating the number of optimization steps)
the second term in the total derivative,
\begin{align}
  \frac{d \ub{J}_i(\theta; \ub{u}_i'(\theta))}{d\theta} = \frac{\partial \ub{J}_i(\theta; \ub{u}_i)}{\partial \theta} \Big|_{\ub{u}_i=\ub{u}_i'(\theta)}
  + \frac{\partial \ub{J}_i(\theta; \ub{u}_i)}{\partial \ub{u}_i}\Big|_{\ub{u}_i=\ub{u}_i'(\theta)} \cdot \frac{\partial \ub{u}_i'(\theta)}{\partial\theta}
\end{align}
does not vanish, and the often complicated dependence of $\ub{u}_i'(\theta)$
on $\theta$ must be explicitly modeled (e.g.\ by ``un-rolling'' the iterations
of a chosen minimization method yielding $\ub{u}_i'(\theta)$). Otherwise, the
estimate for $\nabla_\theta J$ will be biased, and minimization of $J$ will be
eventually hindered. Nevertheless, we are interested in such inexact solutions
$\ub{u}'_i(\theta)$, that can be obtained in finite time (without
warm-starting from a previous estimate), and the question is how close
$\ub{u}'_i(\theta)$ has to be to $\ub{u}^*_i(\theta)$ in order to still
successfully minimize Eq.~\ref{eq:main_objective}. Hence, we are interested in algorithms
that have the following properties:
\begin{enumerate}
\item returns a minimizer (or in general a stationary point) of Eq.~\ref{eq:main_objective},
\item does not require storing $\ub{\vu}=(\ub{u}_1,\dotsc,\ub{u}_N)$ between updates of $\theta$,
\item and is optionally applicable in a stochastic or incremental setting.
\end{enumerate}
We propose two algorithms to minimize Eq.~\ref{eq:main_objective}, that
leverage
% truncated and therefore
inexact minimization for the latent variables $\ub{u}_i$ (described in
Sections~\ref{sec:relaxed_MM} and~\ref{sec:grad_MM}). Our analysis applies to
the setting, when each $\ub{J}_i(\theta; \ub{u}_i)$ is convex in
$\ub{u}_i$. The basic principle is illustrated in Fig.~\ref{fig:teaser}: in
iteration $t$ of each of the proposed algorithms, a new upper bound
parametrized by $\ub{u}^{(t)}$ is found, that guarantees a sufficient
improvement over the previous upper bound according to a respective
criterion. This criterion either uses past objective values
(Fig.~\ref{fig:teaser}(a)) or current gradient information
(Fig.~\ref{fig:teaser}(b)). In Section~\ref{sec:results} we demonstrate the
proposed algorithms for large scale robust estimation instances and for
training a layered energy-based model.

\section{Related Work}
\label{sec:related}

Our proposed methods are based on the majorization-minimization (MM)
principle~\cite{lange2000optimization,hunter2004tutorial}, which generalizes
methods such as
expectation-maximization~\cite{dempster1977maximum,wu1983convergence,neal1998view}
and the convex-concave procedure~\cite{yuille2003concave}. A large number of
variants and extensions of MM exist. The notion of a (global) majorizer is
relaxed in~\cite{mairal2013optimization,mairal2015incremental}, where also a
stochastic variant termed MISO (Minimization by Incremental Surrogate
Optimization) is proposed. The memory consumption of MISO is $O(ND)$, as
sufficient information about each term in Eq.~\ref{eq:main_objective} has to
be maintained. Here $D$ is the size of the data necessary to represent a
surrogate function (i.e.\ $D=\dim(\ub{u}_i)$).
The first-order surrogates introduced in~\cite{mairal2013optimization} are
required to agree with the gradient at the current solution, which is relaxed
to asymptotic agreement in~\cite{xu2016relaxed}.

The first of our proposed methods is based on the ``generalized MM'' method
presented in~\cite{parizi2019generalized}, which relaxes the ``touching
condition'' in MM by a looser diminishing gap criterion. Our second method is
also a variant of MM, but it is stated such that it easily transfers to a
stochastic optimization setting. Since our surrogate functions are only upper
bounds of the true objective, the gradient induced by a mini-batch will be
biased even at the current solution. This is different from
e.g.~\cite{zhang2019generalized}, where noisy surrogate functions are
considered, which have unbiased function values and gradients at the current
solution. The \emph{stochastic successive upper-bound minimization} (SSUM)
algorithm~\cite{razaviyayn2016stochastic} averages information from the
surrogate functions gathered during the iterations. Thus, for Lipschitz
gradient (quadratic) surrogates, the memory requirements reduce to
$O(D)$ (compared to $O(ND)$ for MISO).

Several gradient-based methods that are able to cope with noisy gradient
oracles are presented
in~\cite{bertsekas2000gradient,devolder2014first,dvurechensky2016stochastic}
with different assumptions on the objective function and on the gradient
oracle,

Majorization-minimization is strongly connected to minimization by alternation
(AM). In~\cite{Csiszr1984InformationGA} a ``5-point'' property is proposed,
that is a sufficient condition for AM to converge to a global
minimum. Byrne~\cite{byrne2013alternating} points out that AM (and therefore
MM) fall into a larger class of algorithms termed ``sequential unconstrained
minimization algorithm'' (SUMMA).

Contrastive losses such as the one employed in Section~\ref{sec:CHL} occur
often when model parameters of latent variable models are estimated from
training data (e.g.~\cite{movellan1991contrastive,yu2009learning}). Such
losses can be interpreted either as finite-difference approximations to
implicit
differentiation~\cite{xie2003equivalence,scellier2017equilibrium,zach2019contrastive},
as surrogates for the misclassification loss~\cite{yu2009learning}, or as
approximations to the cross-entropy loss~\cite{zach2019contrastive}. Thus,
contrastive losses are an alternative to the exact gradient computation in
bilevel optimization problems (e.g.\ using the Pineda-Almeida
method~\cite{pineda1987generalization,almeida1990learning,scarselli2008graph}).

\section{Minimization Using Families of Upper Bounds}
\label{sec:methods}

%\subsection{Setting}
\paragraph{General setting}
Let $J: \mathbb{R}^d \to \mathbb{R}_{\ge 0}$ be a differentiable objective
function, that is bounded from below (we choose w.l.o.g.\
$J(\theta)\ge 0$ for all $\theta$). The task is to determine a minimizer
$\theta^*$ of $J$ (or stationary point in general).\footnote{By convergence to
  a stationary point we mean that the gradient converges to 0. Convergence of
  solution is difficult to obtain in the general non-convex setting.}  We
assume that $J$ is difficult to evaluate directly (e.g.\ $J$ has the form of
Eq.~\ref{eq:main_objective}), but a differentiable function
$\ub{J}(\theta; \ub{u})$ taking an additional argument
$\ub{u} \in \mathcal{U} \subseteq \mathbb{R}^{\bar D}$ is available that has
the following properties:
\begin{enumerate}
\item $\ub{J}(\theta,\ub{\vu}) \ge J(\theta)$ for all $\theta\in\mathbb{R}^d$ and $\ub{\vu}\in \mathcal{U}$,
\item $\ub{J}(\theta,\ub{\vu})$ is convex in $\ub{\vu}$ and satisfies strong duality,
\item $J(\theta) = \min_{\ub{\vu} \in \mathcal{U}} \ub{J}(\theta,\ub{\vu})$.
\end{enumerate}
This means that $\ub{J}(\theta,\ub{\vu})$ is a family of upper bounds of $J$
parametrized by $\ub{\vu}\in\mathcal{U}$, and the target objective $J(\theta)$
is given as the lower envelope of $\{\ub{J}(\theta,\ub{\vu}) : \ub{\vu}
\in\mathcal{U}\}$. The second condition implies that optimizing the upper
bound for a given $\theta$ is relatively easy (but in general it still will
require an iterative algorithm). As pointed out in Section~\ref{sec:intro},
$\ub{\vu}$ may be very high-dimensional and expensive to maintain in memory.
We will absorb the constraint $\ub{u}\in\mathcal{U}$ into $\ub{J}$ and
therefore drop this condition in the following.

\paragraph{The baseline algorithm: minimization by alternation}
The straightforward method to minimize $J$ in
Eq.~\ref{eq:initial_objective}/Eq.~\ref{eq:main_objective} is by alternating
minimization (AM) w.r.t.\ $\theta$ and $\ub{u}$.
The downside of AM is, that the entire set of latent variables represented by
$\ub{\vu}$ has to be stored while updating $\theta$. This can be intractable
in machine learning applications when $N\gg 1$ and $D \gg 1$.

\section{Relaxed Generalized Majorization-Minimization}
\label{sec:relaxed_MM}

Our first proposed method extends the generalized
majorization-minimization
method~\cite{parizi2019generalized} to the case when computation of
$J$ is expensive. Majorization-minimzation
(MM,~\cite{lange2000optimization,hunter2004tutorial}) maintains a sequence of
solutions $(\theta^{(t)})_{t=1}^T$ and latent variables
$(\ub{\vu}^{(t)})_{t=1}^T$ such that
\begin{align}
  \theta^{(t-1)} &\gets \arg\min_\theta \ub{J}(\theta, \ub{\vu}^{(t-1)}) & \ub{u}^{(t)} &\gets \arg\min_{\ub{\vu}} \ub{J}(\theta^{(t-1)}, \ub{u}).
\end{align}
Standard MM requires the following ``touching condition'' to be satisfied,
\begin{align}
  \ub{J}(\theta^{(t-1)}, \ub{\vu}^{(t)}) = J(\theta^{(t-1)}).
\end{align}
It should be clear that a standard MM approach is equivalent to the
alternating minimization baseline algorithm. In most applications of MM, the
domain of the latent variables defining the upper bound is identical to the
domain for $\theta$.

Generalized MM relaxes the touching condition to the following one,
\begin{align}
  \ub{J}(\theta^{(t-1)},& \ub{\vu}^{(t)}) \le \eta J(\theta^{(t-1)}) + (1-\eta) \ub{J}(\theta^{(t-1)}, \ub{\vu}^{(t-1)}) \nonumber \\
  {} &= \ub{J}(\theta^{(t-1)}, \ub{\vu}^{(t-1)})- \eta \underbrace{\left( \ub{J}(\theta^{(t-1)}, \ub{\vu}^{(t-1)}) - J(\theta^{(t-1)}) \right)}_{=: d_t},
\end{align}
where $\eta\in(0,1)$ is a user-specified parameter. By construction the gap
$d_t=\ub{J}(\theta^{(t-1)}, \ub{\vu}^{(t-1)}) - J(\theta^{(t-1)})$ is
non-negative. The above condition means that $\ub{\vu}^{(t)}$ has to be chosen
such that the new objective value $\ub{J}(\theta^{(t)}, \ub{\vu}^{(t)})$
is guaranteed to sufficiently improve over the current upper bound $\ub{J}(\theta^{(t-1)}, \ub{\vu}^{(t-1)})$,
\begin{align}
  \ub{J}(\theta^{(t)}, \ub{\vu}^{(t)}) &\le \ub{J}(\theta^{(t-1)}, \ub{\vu}^{(t)})
  \le \ub{J}(\theta^{(t-1)}, \ub{\vu}^{(t-1)}) - \eta d_t. \nonumber
\end{align}
It is shown that the sequence $\lim_{t\to\infty} d_t\to 0$, i.e.\
asymptotically the true cost $J$ is optimized. Since generalized MM decreases
the upper bound less aggressively than standard MM, it has an improved
empirical ability to reach better local minima in highly non-convex
problems~\cite{parizi2019generalized}.

Generalized MM is not directly applicable in our setting, as $J$ is assumed
not to be available (or at least expensive to compute, which is exactly we aim
to avoid). By leveraging convex duality we have a lower bound for
$\lb{J}(\theta, \lb{\vu}) \le J(\theta)$ available. Hence, we modify the
generalized MM approach by replacing $J(\theta^{(t-1)})$ with a lower bound
$\lb{J}(\theta^{(t-1)},\lb{\vu}^{(t)})$ for a suitable dual parameter
$\lb{\vu}^{(t)}$, leading to a condition on $\ub{\vu}^{(t)}$ and $\lb{\vu}^{(t)}$ of the form
\begin{align}
  \ub{J}(\theta^{(t-1)}, \ub{\vu}^{(t)}) \le \eta \lb{J}(\theta^{(t-1)}, \lb{\vu}^{(t)}) %\nonumber \\
                                       + (1-\eta) \ub{J}(\theta^{(t-1)}, \ub{\vu}^{(t-1)}). \nonumber
\end{align}
This condition still has the significant shortcoming, that both
$\ub{J}(\theta^{(t)}, \ub{\vu}^{(t-1)})$ and
$\ub{J}(\theta^{(t-1)}, \ub{\vu}^{(t-1)})$ need to be evaluated. While
computation of the first quantity is firmly required, evaluation of the second
value is unnecessary as we will see in the following. Not needing to compute
$\ub{J}(\theta^{(t-1)}, \ub{\vu}^{(t-1)})$ also means that the memory associated
with $\ub{\vu}^{(t-1)}$ can be immediately reused.
Our proposed condition on $\ub{\vu}^{(t)}$ and $\lb{\vu}^{(t)}$ for a \emph{relaxed generalized} MM (or \emph{ReGeMM})
method is given by
\begin{align}
  \ub{J}(\theta^{(t-1)}, \ub{\vu}^{(t)}) &\le \eta \lb{J}(\theta^{(t-1)},\lb{\vu}^{(t)}) + (1-\eta) \ub{J}(\theta^{(t-2)}, \ub{\vu}^{(t-1)}),
  \label{eq:relaxed_MM}
\end{align}
where $\eta \in (0,1)$, e.g.\ $\eta=1/2$ in our implementation. The resulting
algorithm is given in Alg.~\ref{alg:relaxed_MM}.

\begin{algorithm}[htb]
  \begin{algorithmic}[1]
    \Require Initial $\theta^{(0)}=\theta^{(-1)}$ and $\ub{\vu}^{(0)}$, number of rounds $T$
    \For{$t=1,\dotsc,T$}
    \State Determine $\ub{\vu}^{(t)}$ and $\lb{\vu}^{(t)}$ that satisfy Eq.~\ref{eq:relaxed_MM}
    \State Set $\theta^{(t)} \gets \arg\min_\theta \ub{J}(\theta, \ub{\vu}^{(t)})$
    \EndFor
    \State \Return $\theta^{(T)}$
  \end{algorithmic}
  \caption{ReGeMM: Relaxed Generalized Majorization-Minimization}
  \label{alg:relaxed_MM}
\end{algorithm}
The existence of a pair $(\ub{\vu}^{(t)},\lb{\vu}^{(t)})$ is guaranteed, since
both $\lb{J}(\theta^{(t-1)}; \lb{\vu}^{(t)})$ and
$\ub{J}(\theta^{(t-1)}; \ub{\vu}^{(t)})$ can be made arbitrarily close to
$J(\theta^{(t-1)})$ by our assumption of strong duality.
We introduce $c_t$,
\begin{align}
  c_t := \ub{J}(\theta^{(t-2)},\ub{\vu}^{(t-1)}) - \lb{J}(\theta^{(t-1)},\lb{\vu}^{(t)}) \ge 0,
\end{align}
and Eq.~\ref{eq:relaxed_MM} can therefore be restated as
\begin{align}
  \ub{J}(\theta^{(t-1)}, \ub{\vu}^{(t)}) \le \ub{J}(\theta^{(t-2)},\ub{\vu}^{(t-1)}) - \eta c_t.
\end{align}
\begin{proposition}
  We have $\lim_{t\to\infty} c_t = 0$.
\end{proposition}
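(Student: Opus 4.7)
The plan is a direct telescoping argument based on the restated form of Eq.~\ref{eq:relaxed_MM}, namely
\begin{align*}
\ub{J}(\theta^{(t-1)}, \ub{\vu}^{(t)}) \le \ub{J}(\theta^{(t-2)},\ub{\vu}^{(t-1)}) - \eta c_t.
\end{align*}
First I would introduce the shorthand $b_t := \ub{J}(\theta^{(t-2)},\ub{\vu}^{(t-1)})$, so that the inequality above reads exactly as $b_{t+1} \le b_t - \eta c_t$, since $b_{t+1} = \ub{J}(\theta^{(t-1)}, \ub{\vu}^{(t)})$. Notice that this recursion does not even require invoking the $\theta$-minimization step (that step is used only to ensure monotonicity of the other sequence $\ub{J}(\theta^{(t)},\ub{\vu}^{(t)})$; it is not needed here).

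Next I would telescope over $t=1,\dotsc,T$ to obtain
\begin{align*}
\eta \sum_{t=1}^{T} c_t \le b_1 - b_{T+1}.
\end{align*}
To bound the right-hand side uniformly in $T$, I would invoke the standing assumption that $J \ge 0$ together with the upper-bound property $\ub{J}(\theta,\ub{\vu}) \ge J(\theta)$, which yields $b_{T+1} \ge 0$. Hence
\begin{align*}
\sum_{t=1}^{T} c_t \le \frac{b_1}{\eta} = \frac{\ub{J}(\theta^{(-1)},\ub{\vu}^{(0)})}{\eta},
\end{align*}
a bound independent of $T$. Because each $c_t \ge 0$, the infinite series $\sum_{t=1}^\infty c_t$ converges, which forces $c_t \to 0$.

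There is no serious obstacle here; the proof is essentially bookkeeping. The only place where one needs to be careful is the index shift hidden in the definition of $c_t$ (which compares the upper bound at $\theta^{(t-2)}$ with the lower bound at $\theta^{(t-1)}$) and in Eq.~\ref{eq:relaxed_MM}: picking the right shorthand $b_t$ makes the telescoping pattern transparent and removes the apparent mismatch between the indices on $\theta$ in the two terms of the recursion. Non-negativity of $c_t$, which is needed to pass from summability to $c_t\to 0$, is precisely the content of the inequality $\lb{J}(\theta^{(t-1)},\lb{\vu}^{(t)}) \le J(\theta^{(t-1)}) \le \ub{J}(\theta^{(t-2)},\ub{\vu}^{(t-1)})$, combining the weak-duality lower bound with the upper-bound property of $\ub{J}$.
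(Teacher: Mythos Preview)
Your telescoping argument is essentially the paper's proof: with your shorthand $b_t = \ub{J}(\theta^{(t-2)},\ub{\vu}^{(t-1)})$, the paper's auxiliary quantity is $v_t = b_t - \eta c_t$, and the inequality $b_{t+1}\le v_t$ followed by the telescope and the lower bound on $\ub{J}$ is exactly what the paper does.

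There is one slip in your justification of $c_t\ge 0$. You write that
\[
J(\theta^{(t-1)}) \le \ub{J}(\theta^{(t-2)},\ub{\vu}^{(t-1)})
\]
follows from ``the upper-bound property of $\ub{J}$''. It does not: the upper-bound property yields only $\ub{J}(\theta^{(t-2)},\ub{\vu}^{(t-1)}) \ge J(\theta^{(t-2)})$, with the wrong $\theta$-index. To bridge the index mismatch you need either
\begin{itemize}
\item the $\theta$-minimization step, i.e.\ $\ub{J}(\theta^{(t-1)},\ub{\vu}^{(t-1)}) \le \ub{J}(\theta^{(t-2)},\ub{\vu}^{(t-1)})$ because $\theta^{(t-1)}\in\arg\min_\theta \ub{J}(\theta,\ub{\vu}^{(t-1)})$ --- this is exactly what the paper invokes, and it contradicts your parenthetical remark that the $\theta$-update ``is not needed here''; or
\item alternatively, the ReGeMM condition Eq.~\ref{eq:relaxed_MM} itself: subtracting $\lb{J}(\theta^{(t-1)},\lb{\vu}^{(t)})$ from both sides gives $0 \le \ub{J}(\theta^{(t-1)},\ub{\vu}^{(t)}) - \lb{J}(\theta^{(t-1)},\lb{\vu}^{(t)}) \le (1-\eta)\,c_t$, hence $c_t\ge 0$ without invoking the $\theta$-step.
\end{itemize}
Either fix is immediate, but as written the chain of inequalities is not justified.
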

\begin{proof}
  We define $v_t := \ub{J}(\theta^{(t-2)},\ub{\vu}^{(t-1)}) - \eta c_t$.
  First, observe that
  \begin{align}
    c_t &= \ub{J}(\theta^{(t-2)}; \ub{\vu}^{(t-1)}) - \lb{J}(\theta^{(t-1)}; \lb{\vu}^{(t)})
    \ge \ub{J}(\theta^{(t-1)}; \ub{\vu}^{(t-1)}) - \lb{J}(\theta^{(t-1)}; \lb{\vu}^{(t)}) \nonumber\\
    &\ge \ub{J}(\theta^{(t-1)}; \ub{\vu}^{(t-1)}) - J(\theta^{(t-1)}) \ge 0 \nonumber
  \end{align}
  (using the relations $\ub{J}(\theta^{(t-1)}; \ub{\vu}^{(t-1)}) \le
  \ub{J}(\theta^{(t-2)}; \ub{\vu}^{(t-1)})$ and $\lb{J}(\theta^{(t-1)}; \lb{\vu})
  \le J(\theta^{(t-1)}) $ $\le \ub{J}(\theta^{(t-1)}; \ub{\vu})$ for any $\lb{\vu}$
  and $\ub{\vu}$). We further have
  \begin{align}
    \sum\nolimits_{t=1}^T c_t &= \frac{1}{\eta} \sum\nolimits_{t=1}^T \left( \ub{J}(\theta^{(t-2)}; \ub{\vu}^{(t-1)}) - v_t \right) \nonumber \\
    &\le \frac{1}{\eta} \sum\nolimits_{t=1}^T \left( \ub{J}(\theta^{(t-2)}; \ub{\vu}^{(t-1)}) - \ub{J}(\theta^{(t-1)}; \ub{\vu}^{(t)}) \right) \nonumber \\
    &= \frac{1}{\eta} \left( \ub{J}(\theta^{(-1)}; \ub{\vu}^{(0)}) - \ub{J}(\theta^{(T-1)}; \ub{\vu}^{(T)}) \right) < \infty, \nonumber
  \end{align}
  since $\ub{J}$ is bounded from below. In the first line we used the
  definition of $d_t$ and in the second line we utilized that
  $\ub{J}(\theta^{(t-1)}; \ub{\vu}^{(t)})\le v_t$. The last line follows from
  the telescopic sum. Overall, we have that
  \begin{align}
    \lim_{T\to\infty} \sum\nolimits_{t=1}^T c_t &= \frac{\ub{J}(\theta^{(-1)}; \ub{\vu}^{(0)}) - \lim_{T\to\infty} \ub{J}(\theta^{(T-1)}; \ub{\vu}^{(T)})}{\eta} \nonumber, % < \infty
  \end{align}
  which is finite, since $J$ (and therefore $\ub{J}$) is bounded from
  below. From $c_t \ge 0$ and
  $\lim_{T\to\infty} \sum_{t=1}^T c_t < \infty$ we deduce that
  $\lim_{t\to\infty} c_t=0$.
\end{proof}
Hence, in analogy with the generalized MM method~\cite{parizi2019generalized},
the upper bound $\ub{J}(\theta^{(t)},\ub{\vu}^{(t)})$ approaches the target
objective value $J(\theta^{(t)})$ in the proposed relaxed scheme.
This result also implies that finding $\ub{\vu}^{(t)}$ will be increasingly
harder. This is expected, since one ultimately aims to minimize $J$. If we
additionally assume that the mapping
$\theta\mapsto \nabla_\theta\ub{J}(\theta,\ub{\vu})$ has Lipschitz gradient
for all $\ub{\vu}$, then it can be also shown that
$\nabla_\theta\ub{J}(\theta^{(t)},\ub{\vu}^{(t)}) \to 0$ (we refer to the
appendix).

The relaxed generalized MM approach is therefore a well-understood method when
applied in a full batch scenario (recall Eq.~\ref{eq:main_objective}). Since
the condition in Eq.~\ref{eq:relaxed_MM} is based on all terms in the
objectives, it is not clear how it generalizes to an incremental or stochastic
setting, when $\theta$ is updated using small mini-batches. This is the
motivation for developing an alternative criterion to Eq.~\ref{eq:relaxed_MM}
in the next section, that is based on ``local'' quantities.

%\subsection{Using constant memory}
\paragraph{Using constant memory}
\label{sec:constant_memory}

Naive implementations of Alg.~\ref{alg:relaxed_MM} require $O(N)$ memory to
store $\ub{\vu}=(\ub{u}_1,\dotsc,\ub{u}_N)$. In many applications the number
of terms $N$ is large, but the latent variables $(\ub{u}_i)_i$ have the same
structure for all $i$ (e.g.\ $\ub{u}_i$ represent pixel-level predictions for
training images of the same dimensions). If we use a gradient method to update
$\theta$, then the required quantities can be accumulated in-place, as shown
in the appendix.
The constant memory algorithm is not limited to first order methods for
$\theta$, but any method that accumulates the information needed to determine
$\theta^{(t)}$ from $\theta^{(t-1)}$ in-place is feasible (such as the Newton
or the Gauss-Newton method).

\section{Sufficient Descent Majorization-Minimization}
\label{sec:grad_MM}

The ReGeMM method proposed above has two disadvantages: (i) the underlying
condition is somewhat technical and it is also a global condition, and (ii)
the resulting algorithm does not straightforwardly generalize to incremental
or stochastic methods, that have proven to be far superior compared to
full-batch approaches, especially in machine learning scenarios.

In this section we make the additional assumption on $\ub{J}$, that
\begin{align}
  \ub{J}(\theta', \ub{\vu}) &\le \ub{J}(\theta, \ub{\vu})
  + \nabla_\theta \ub{J}(\theta, \ub{\vu})^T (\theta'-\theta) + \frac{L}{2} \norm{\theta'-\theta}^2,
  \label{eq:Lipschitz_gradient}
\end{align}
for a constant $L>0$ and all $\ub{\vu}$. This essentially means, that the
mapping $\theta \mapsto \ub{J}(\theta, \ub{\vu})$ has a Lipschitz gradient
with Lipschitz constant $L$. This assumption is frequent in many
gradient-based minimization methods. Note that the minimizer of the r.h.s.\ in
Eq.~\ref{eq:Lipschitz_gradient} w.r.t.\ $\theta'$ is given by $\theta' =
\theta - \frac{1}{L} \nabla_\theta \ub{J}(\theta, \ub{\vu})$.
Hence, we focus on gradient-based updates of $\theta$ in the following, i.e.\
$\theta^{(t)}$ is given by
\begin{align}
  \theta^{(t)} = \theta^{(t-1)} - \frac{1}{L} \nabla_\theta \ub{J}(\theta^{(t-1)}, \ub{\vu}^{(t)}).
\end{align}
Combining this with Eq.~\ref{eq:Lipschitz_gradient} yields
\begin{align}
  \ub{J}(\theta^{(t)}, \ub{\vu}^{(t)}) &\le \ub{J}(\theta^{(t-1)}, \ub{\vu}^{(t)})
  - \frac{1}{2L} \norm{\nabla_\theta \ub{J}(\theta^{(t-1)}, \ub{\vu}^{(t)})}^2, \nonumber
\end{align}
hence the update from $\theta^{(t-1)}$ to $\theta^{(t)}$ yields a guaranteed
reduction of $\ub{J}(\cdot, \ub{\vu}^{(t)})$ in terms of the respective gradient
magnitude.

We therefore propose the following condition on
$(\ub{\vu}^{(t)},\lb{\vu}^{(t)})$ based on the current iterate
$\theta^{(t-1)}$: for a $\rho\in(0,1)$ (which is set to $\rho=1/2$ in our
implementation) determine $\ub{\vu}^{(t)}$ and $\lb{\vu}^{(t)}$ such that
\begin{align}
  \ub{J}(\theta^{(t-1)}; \ub{\vu}^{(t)}) - \lb{J}(\theta^{(t-1)}; \lb{\vu}^{(t)}) \le \frac{\rho}{2L} \norm{\nabla \ub{J}(\theta^{(t-1)}; \ub{\vu}^{(t)})}^2
  \label{eq:gradient_condition}
\end{align}
This condition requires intuitively, that the duality gap
$\ub{J}(\theta^{(t-1)}; \ub{\vu}^{(t)}) - \lb{J}(\theta^{(t-1)}; \lb{\vu}^{(t)})$
is sufficiently smaller than the reduction of $\ub{J}(\cdot;
\ub{u}^{(t)})$  guaranteed by a gradient descent step.
Rearranging the above condition (and using that
$\theta^{(t)}=\theta^{(t-1)}- \nabla\ub{J}(\theta^{(t-1)},\ub{\vu}^{(t)})/L$)
yields
\begin{align}
  \ub{J}(\theta^{(t)}; \ub{\vu}^{(t)}) &\le \ub{J}(\theta^{(t-1)}; \ub{\vu}^{(t)}) - \frac{1}{2L} \norm{\nabla \ub{J}(\theta^{(t-1)}; \ub{\vu}^{(t)})}^2 \nonumber \\
  {} &\le \lb{J}(\theta^{(t-1)}; \lb{\vu}^{(t)}) - \frac{1-\rho}{2L} \norm{\nabla \ub{J}(\theta^{(t-1)}; \ub{\vu}^{(t)})}^2 \nonumber \\
  {} &\le J(\theta^{(t-1)}) - \frac{1-\rho}{2L} \norm{\nabla \ub{J}(\theta^{(t-1)}; \ub{\vu}^{(t)})}^2 \nonumber,
\end{align}
i.e.\ the upper bound at the new solution $\theta^{(t)}$ is sufficiently below
the lower bound (and the true function value) at the current solution
$\theta^{(t-1)}$. This can be stated compactly,
\begin{align}
  J(\theta^{(t)}) &\le \ub{J}(\theta^{(t)}; \ub{\vu}^{(t)})
                    \le J(\theta^{(t-1)}) - \frac{1-\rho}{2L} \norm{\nabla \ub{J}(\theta^{(t-1)}; \ub{\vu}^{(t)})}^2,
  \label{eq:reduction}
\end{align}
and the sequence $(J(\theta^{(t)})_{t=1}^{\infty}$ is therefore
non-increasing. Since we are always asking for a sufficient decrease (in
analogy with the Armijo condition), we expect convergence to a stationary
solution $\theta^*$. This is the case:
\begin{proposition}
  $\lim_{t\to\infty} \nabla_\theta \ub{J}(\theta^{(t-1)}; \ub{\vu}^{(t)}) = 0$.
\end{proposition}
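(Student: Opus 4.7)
The plan is to exploit the per-step reduction already packaged in Eq.~\ref{eq:reduction} and convert it into a telescoping sum, in complete analogy with the standard convergence proof for gradient descent on Lipschitz-smooth functions.

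First I would rewrite Eq.~\ref{eq:reduction} as
\begin{align}
  \tfrac{1-\rho}{2L} \norm{\nabla_\theta \ub{J}(\theta^{(t-1)}; \ub{\vu}^{(t)})}^2 \le J(\theta^{(t-1)}) - J(\theta^{(t)}), \nonumber
\end{align}
which is valid for every $t \ge 1$ by Proposition~\stepcounter{proposition}\arabic{proposition} (the statement currently being proved would be the second, and the bound holds because the right-hand chain in Eq.~\ref{eq:reduction} gives $J(\theta^{(t)}) \le J(\theta^{(t-1)}) - \tfrac{1-\rho}{2L}\norm{\nabla_\theta \ub{J}(\theta^{(t-1)}; \ub{\vu}^{(t)})}^2$). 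Note that $\rho \in (0,1)$ so the coefficient $(1-\rho)/(2L)$ is strictly positive.

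Next I would sum both sides from $t=1$ to $t=T$ and use the telescoping cancellation on the right:
\begin{align}
  \tfrac{1-\rho}{2L} \sum\nolimits_{t=1}^T \norm{\nabla_\theta \ub{J}(\theta^{(t-1)}; \ub{\vu}^{(t)})}^2
  \le J(\theta^{(0)}) - J(\theta^{(T)}) \le J(\theta^{(0)}), \nonumber
\end{align}
where the last step uses the standing assumption $J \ge 0$ from the general setting in Section~\ref{sec:methods}. Since the partial sums are bounded above by the constant $\tfrac{2L}{1-\rho} J(\theta^{(0)})$ independently of $T$, and each summand is non-negative, the series $\sum_t \norm{\nabla_\theta \ub{J}(\theta^{(t-1)}; \ub{\vu}^{(t)})}^2$ converges, so its general term must tend to zero, yielding $\nabla_\theta \ub{J}(\theta^{(t-1)}; \ub{\vu}^{(t)}) \to 0$.

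There is no real obstacle: the sufficient-descent condition Eq.~\ref{eq:gradient_condition} was tailored precisely so that the duality gap is absorbed into half of the Lipschitz-smooth gradient decrease, leaving the other half as a guaranteed reduction on the true objective $J$ itself. The only subtle point I would flag is that the reduction has to be on $J$ (not merely on $\ub{J}(\cdot,\ub{\vu}^{(t)})$, which changes its parameter $\ub{\vu}^{(t)}$ each iteration and therefore does not telescope); this is exactly what the second and third lines in the derivation preceding Eq.~\ref{eq:reduction} achieve by bounding $\ub{J}(\theta^{(t)};\ub{\vu}^{(t)})$ against $J(\theta^{(t-1)})$ via the lower bound $\lb{J}(\theta^{(t-1)};\lb{\vu}^{(t)})$.
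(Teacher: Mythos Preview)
Your proof is correct and follows essentially the same telescoping argument as the paper: rearrange Eq.~\ref{eq:reduction}, sum over $t$, use that the right-hand side telescopes to something finite (the paper writes $J(\theta^{(0)})-J(\theta^*)$ where you use $J(\theta^{(0)})-0$, which is the same standing assumption), and conclude that the summable non-negative terms must vanish. Your closing remark about why the reduction must be on $J$ rather than on the moving surrogate $\ub{J}(\cdot,\ub{\vu}^{(t)})$ is a nice clarification that the paper leaves implicit.
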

\begin{proof}
  By rearranging Eq.~\ref{eq:reduction} we have
  \begin{align}
    \sum\nolimits_t &\norm{\nabla \ub{J}(\theta^{(t-1)}; \ub{\vu}^{(t)})}^2
             \le \frac{2L}{1-\rho} \sum\nolimits_t \left( J(\theta^{(t-1)}) - J(\theta^{(t)}) \right)
                      = \frac{2L}{1-\rho} \left( J(\theta^{(0)}) - J(\theta^*) \right) < \infty \nonumber,
  \end{align}
  and therefore $\norm{\nabla \ub{J}(\theta^{(t-1)}; \ub{\vu}^{(t)})} \to 0$,
  which implies that $\nabla \ub{J}(\theta^{(t-1)}; \ub{\vu}^{(t)}) \to 0$.
\end{proof}
We summarize the resulting \emph{sufficient descent MM} (or \emph{SuDeMM})
method in Alg.~\ref{alg:SuDeMM}.
As with the ReGeMM approach, determining $\ub{u}$ is more difficult when
closing in on a stationary point (as
$\nabla_\theta \ub{J}(\theta^{(t)},\ub{u}) \to 0$). The gradient step
indicated in line~3 in Alg.~\ref{alg:SuDeMM} can be replaced by any
update that guarantees sufficient descent. Finally, in analogy with the ReGeMM
approach discussed in the previous section, it is straightforward to obtain a
constant memory variant of Alg.~\ref{alg:SuDeMM}. The stochastic method
described below incorporates both immediate memory reduction from
$O(N)$ to $O(B)$, where $B$ is the size of the mini-batch, and faster
minimization due to the use of mini-batches.

\begin{algorithm}[tb]
  \begin{algorithmic}[1]
    \Require Initial $\theta^{(0)}$, number of rounds $T$
    \For{$t=1,\dotsc,T$}
    \State Determine $\ub{\vu}^{(t)}$ and $\lb{\vu}^{(t)}$ that satisfy Eq.~\ref{eq:gradient_condition}
    \State Set $\theta^{(t)} \gets \theta^{(t-1)} - \frac{1}{L} \nabla_\theta \ub{J}(\theta^{(t-1)}, \ub{\vu}^{(t)})$
    \EndFor
    \State \Return $\theta^{(T)}$
  \end{algorithmic}
  \caption{SuDeMM: Sufficient-Descent Majorization-Minimization}
  \label{alg:SuDeMM}
\end{algorithm}

\subsection{Extension to the stochastic setting}

In many machine learning applications $J$ will be of the form of
Eq.~\ref{eq:main_objective} with $N\gg 1$ being the number of training
samples. It is well known that in such settings methods levering the full
gradient accumulated over all training samples are hugely outperformed by
stochastic gradient methods, which operate on a single training sample (i.e.\
term in Eq.~\ref{eq:main_objective}) or, alternatively, on a small mini-batch
of size $B$ randomly drawn from the range $\{1,\dotsc,N\}$.

It is straightforward to extend Alg.~\ref{alg:SuDeMM} to a stochastic
setting working on single data points (or mini-batches) by replacing the
objective values $\ub{J}(\theta^{(t-1)}; \ub{\vu}^{(t)})$,
$\lb{J}(\theta^{(t-1)}; \lb{u}^{(t)})$) and the full gradient
$\nabla_\theta \ub{J}(\theta^{(t-1)}, \ub{\vu}^{(t)})$ with the respective
mini-batch counter-parts. The resulting algorithm is depicted in
Alg.~\ref{alg:SuDeMM_stochastic} (for mini-batches of size~one). Due to the
stochastic nature of the gradient estimate
$\nabla_\theta \ub{J}_i(\theta^{(t-1)}, \ub{\vu}^{(t)})$, both the step sizes
$\alpha_t>0$ and the reduction parameter $\rho_t>0$ are time-dependent and
need to satisfy the following conditions,
\begin{align}
  \sum\nolimits_{t=1}^\infty \alpha_t = \infty & & \sum\nolimits_{t=1}^\infty \alpha_t^2 < \infty & & \sum\nolimits_{t=1}^\infty \rho_t < \infty.
\end{align}
The first two conditions on the step sizes $(\alpha_t)_t$ are standard in
stochastic gradient methods, and the last condition on the sequence
$(\rho_t)_t$ ensures that the added noise by using time-dependent upper bounds
$\ub{J}(\cdot,\ub{\vu}^{(t)})$ (instead of the time-independent function
$J(\cdot)$) has bounded variance. The constraint on $\rho_t$ is therefore stronger
than the intuitively necessary condition
$\rho_t\stackrel{t\to\infty}\to 0$. We refer to the appendix for
a detailed discussion. Due to the small size $B$ of a mini-batch, the values
of $\ub{\vu}^{(t)}_i$ and $\lb{\vu}^{(t)}_i$ in the mini-batch can be
maintained, and the restarting strategy outlined in
Section~\ref{sec:constant_memory} is not necessary.

\begin{algorithm}[tb]
  \begin{algorithmic}[1]
    \Require Initial $\theta^{(0)}$, number of rounds $T$
    \For{$t=1,\dotsc,T$}
    \State Uniformly sample $i$ from $\{1,\dotsc,N\}$
    \State Determine $\ub{u}_i$ and $\lb{u}_i$ that satisfy
    \begin{align}
      \ub{J}_i(\theta^{(t-1)},\ub{u}_i) - \lb{J}_i(\theta^{(t-1)},\lb{u}_i) \le \frac{\rho_t}{2} \norm{\nabla_\theta \ub{J}_i(\theta^{(t-1)},\ub{u}_i)}^2
    \end{align}
    \State Set $\theta^{(t)} \gets \theta^{(t-1)} - \alpha_t \nabla_\theta \ub{J}_i(\theta^{(t-1)}, \ub{u}^{(t)})$
    \EndFor
    \State \Return $\theta^{(T)}$
  \end{algorithmic}
  \caption{Stochastic Sufficient Descent Majorization-Minimization}
  \label{alg:SuDeMM_stochastic}
\end{algorithm}

\section{Applications}
\label{sec:results}

\subsection{Robust Bundle Adjustment}

In this experiment we first demonstrate the applicability of our proposed
ReGeMM schemes to a large scale robust fitting task. The aim is to determine
whether ReGeMM is also able to avoid poor local minima (in analogy with the
$k$-means experiment in~\cite{parizi2019generalized}). The hypothesis is, that
optimizing the latent variables just enough to meet the ReGeMM condition
(Eq.~\ref{eq:relaxed_MM}) corresponds to a particular variant of graduated
optimization, and therefore will (empirically) return better local minima for
highly non-convex problems.

Robust bundle adjustment aims to refine the camera poses and 3D point
structure to maximize a log-likelihood given image observations and
established correspondences. The unknowns are
$\theta=(P_1,\dotsc,P_n,X_1,\dotsc,X_m)$, where $P_k\in\mathbb{R}^6$ refers to
the $k$-th camera pose and $X_j\in\mathbb{R}^3$ is the position of the
$j$-th 3D point. The cost $J$ is given by
\begin{align}
  J(\theta) &= \sum\nolimits_i \psi(f_i(\theta) - m_i), \label{eq:BA_cost}
\end{align}
where $m_i\in\mathbb{R}^2$ is the $i$-th image observation and $f_i$ projects
the respective 3D point to the image plane of the corresponding camera.
$\psi$ is a so called robust kernel, which generally turns $J$ into a highly
non-convex objective functions with a large number of local
minima. Following~\cite{zach2014robust} an upper bound $\ub{J}$ is given via
half-quadratic (HQ) minimization~\cite{geman1992constrained},
\begin{align}
  \ub{J}(\theta, \ub{\vu}) &= \sum\nolimits_i \left( \frac{\ub{u}_i}{2} \norm{f_i(\theta) - m_i}^2 + \kappa(\ub{u}_i) \right),
\end{align}
where $\kappa: \mathbb{R}_{\ge 0} \to \mathbb{R}_{\ge 0}$ depends on the
choice for $\psi$, and $\ub{u}_i$ is identified as the (confidence) weight on
the $i$-th observation. The standard MM approach corresponds essentially to
the iteratively reweighted least squares method (IRLS), which is prone to
yield poor local minima if $\theta$ is not well initialized. For given
$\theta$ the optimal latent variables $\ub{u}_i^*(\theta)$ are given by
\begin{align}
  \ub{u}_i^*(\theta) = \omega(\norm{f_i(\theta) - m_i}),
\end{align}
where $\omega(\cdot)$ is the weight function associated with the robust kernel
$\psi$. Joint HQ minimization of $\ub{J}$ w.r.t.\ $\theta$ and $\ub{u}$ is
suggested and evaluated in~\cite{zach2014robust}, which empirically yields
significantly better local minima of $J$ than IRLS.
We compare this joint-HQ method (as well as IRLS and an explicit graduated
method~\cite{zach2018secrets}) with our ReGeMM condition
(Eq.~\ref{eq:relaxed_MM}), where the confidence weights $\ub{\vu}$ are
optimized to meet but not substantially surpass this criterion: a scale
parameter $\sigma\ge 1$ is determined such that $\ub{u}_i^{(t)}$ is set to
$ \omega(\norm{f_i(\theta^{(t-1)}) - m_i}/\sigma)$, and $\ub{\vu}$ satisfies
the ReGeMM condition (Eq.~\ref{eq:relaxed_MM}) and
\begin{align}
  \eta' J(\theta^{(t-1)}) + (1-\eta') \ub{J}(\theta^{(t-2)}, \ub{\vu}^{(t-1)}) &\le \ub{J}(\theta^{(t-1)}, \ub{\vu}^{(t)})
\end{align}
for an $\eta' \in (\eta, 1)$.
In our implementation we determine $\sigma$ using bisection search and choose
$\eta' = 3/4$. In this application the evaluation of $J$ is inexpensive, and
therefore we use $J(\theta^{(t-1)})$ instead of a lower bound
$\lb{J}(\theta^{(t-1)},\lb{\vu}^{(t)})$ in the r.h.s. The model parameters
$\theta$ are updated for given $\ub{\vu}$ using a Levenberg-Marquardt
solver. Our choice of $\psi$ is the smooth truncated quadratic
cost~\cite{zach2014robust}.

In Fig.~\ref{fig:results_BA_evolution} we depict the evolution of the target
objective Eq.~\ref{eq:BA_cost} for two metric bundle adjustment instances
from~\cite{agarwal2010bundle}. The proposed ReGeMM approach (with the initial
confidence weights $\ub{\vu}$ all set to~1) compares favorably against IRLS,
joint HQ~\cite{zach2014robust} and even graduated
optimization~\cite{zach2018secrets} (that leads only to a slightly better
minimum).%\footnote{Data provided by the authors of~\cite{zach2018secrets}.}
This observation is supported by comparing the methods using a larger database
of 20~problem instances~\cite{agarwal2010bundle} in
Fig.~\ref{fig:results_BA_final}, where the final objective values reached by
different methods are depicted. ReGeMM is again highly competitive. In
terms of run-time, ReGeMM is beetwen 5\% and 25\% slower than IRLS in our
implementation.

\begin{figure}[t]
  \centering
  \includegraphics[width=0.48\textwidth]{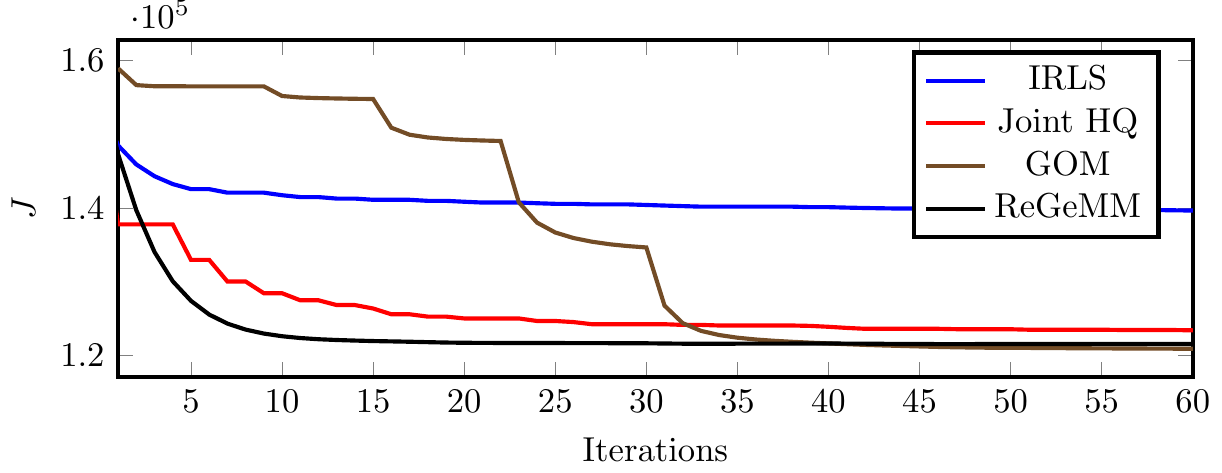}
  \includegraphics[width=0.48\textwidth]{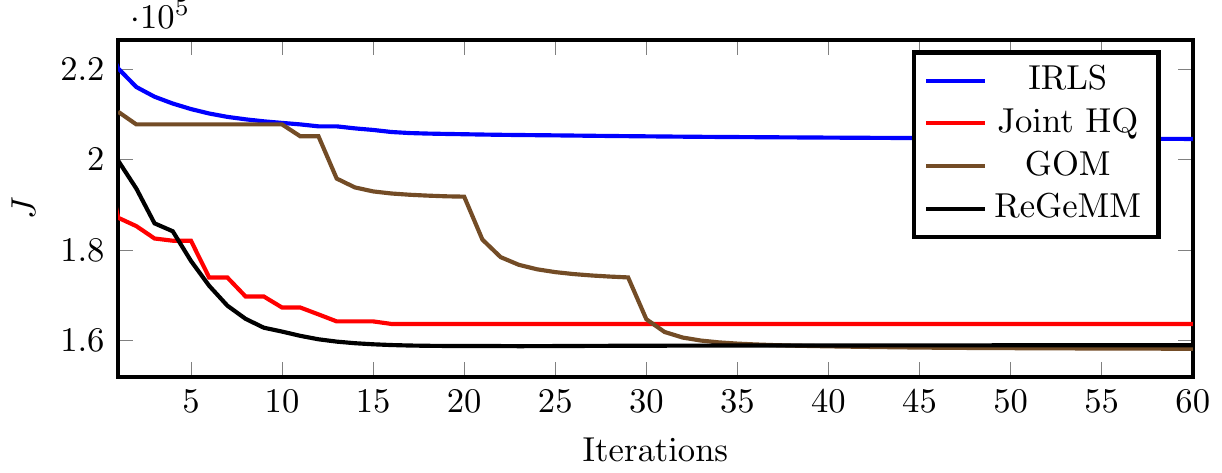}
  \caption{Objective value w.r.t.\ number of iterations of a NNLS solver
    for the Dubrovnik-356 (left) and Venice-427 (right) datasets. }
  \label{fig:results_BA_evolution}
\end{figure}

\begin{figure}[t]
  \centering
  \includegraphics[width=0.98\textwidth]{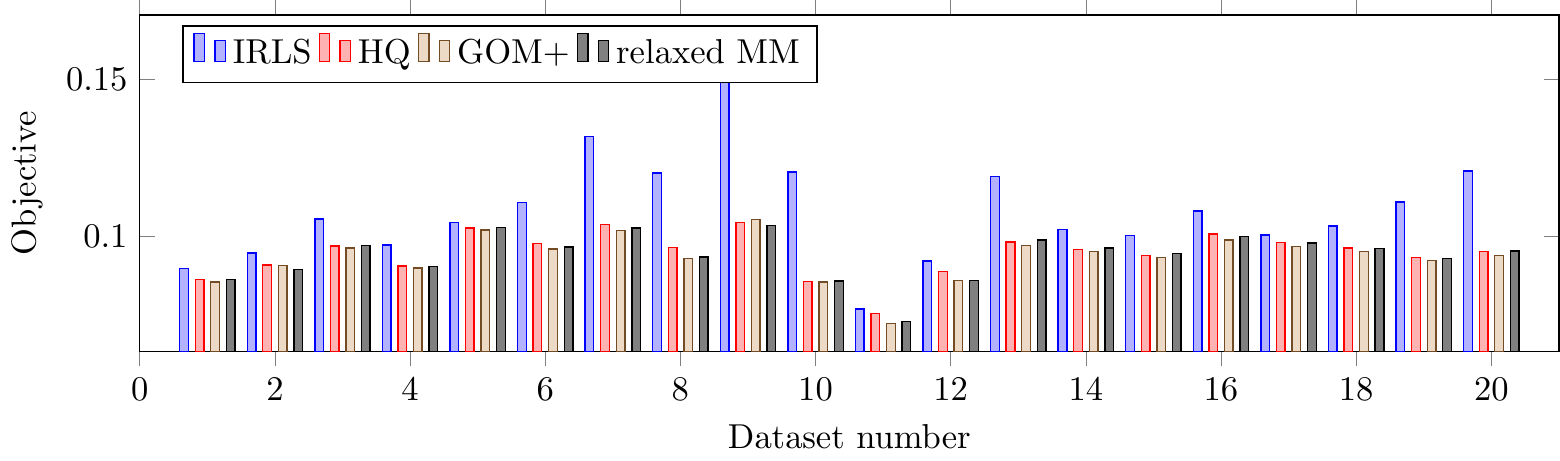}
  \caption{Final objective values reached by different methods for 20~metric
    bundle adjustment instances after 100~NNLS solver iterations. }
  \label{fig:results_BA_final}
\end{figure}

\subsection{Contrastive Hebbian Learning}
\label{sec:CHL}

Contrastive Hebbian learning uses an energy model over latent variables to
explicitly infer (i.e.\ minimize over) the network activations (instead of
using a predefined rule such as in feed-forward DNNs). Feed-forward DNNs using
certain activation functions can be identified as limit case of suitable
energy-based
models~\cite{xie2003equivalence,scellier2017equilibrium,zach2019contrastive}. We
use the formulation proposed in~\cite{zach2019contrastive} due to the
underlying convexity of the energy model. In the following we outline that the
corresponding supervised learning task is an instance of
Eq.~\ref{eq:main_objective}. In contrastive Hebbian learning the activations
for the network are inferred in two phases: the \emph{clamped phase} uses
information from the target label (via a loss function $\ell$ that is convex
in its first argument) to steer the output layer, and the \emph{free phase}
does not put any constraint on the output. The input layer is always clamped
to the provided training input. The clamped network energy is given
by\footnote{We omit the explicit feedback parameter used
  in~\cite{xie2003equivalence,zach2019contrastive}, since it can be absorbed
  into the activations and network weights.}
\begin{align}
  \hat E(z; \theta) &= \ell(a_L; y) + \frac{1}{2} \Norm{z_1 - W_0 x - b_0}^2
                 + \frac{1}{2} \sum\nolimits_{k=1}^{L-2} \Norm{z_{k+1} - W_kz_k - b_k}^2
\end{align}
subject to $z_k \in \mathcal C_k$, where $\mathcal C_k$ is a convex set and
$\theta$ contains all network weights $W_k$ and biases $b_k$. In order to
mimic DNNs with ReLU activations, we choose
$\mathcal C_k = \mathbb{R}_{\ge 0}^{n_k}$. The loss function is chosen to be
the Euclidean loss, $\ell(a_L;y) = \norm{a_L-y}^2/2$. The dual network energy
can be derived as
\begin{align}
  \hat E^*(\lambda; \theta) &= -\ell^*(-\lambda_L; y) - \frac{1}{2} \sum\nolimits_{k=1}^{L-1} \Norm{\lambda_k}^2 - \lambda_1^T W_0x + \sum\nolimits_{k=1}^L \lambda_k^T b_{k-1}
\end{align}
subject to $\lambda_k \ge W_k^T \lambda_{k+1}$ for $k=1,\dotsc,L-1$.
If $\ell\equiv 0$, i.e.\ there is no loss on the final layer output, then we
denote the corresponding \emph{free} primal and dual energies by
$\check E$ and $\check E^*$, respectively.
Observe that $\hat E$/$\check E$ are convex w.r.t.\ the network activations
$z$, and $\hat E^*$/$\check E^*$ are concave w.r.t.\ the dual variables
$\lambda$.

\begin{figure}[t]
  \centering
  \includegraphics[width=0.49\textwidth]{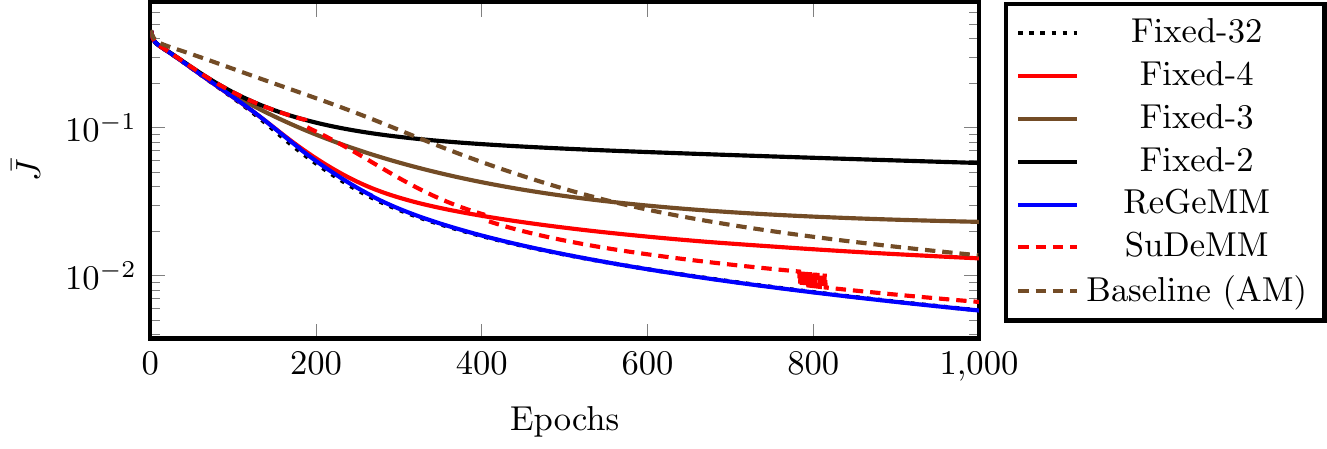}
  \includegraphics[width=0.49\textwidth]{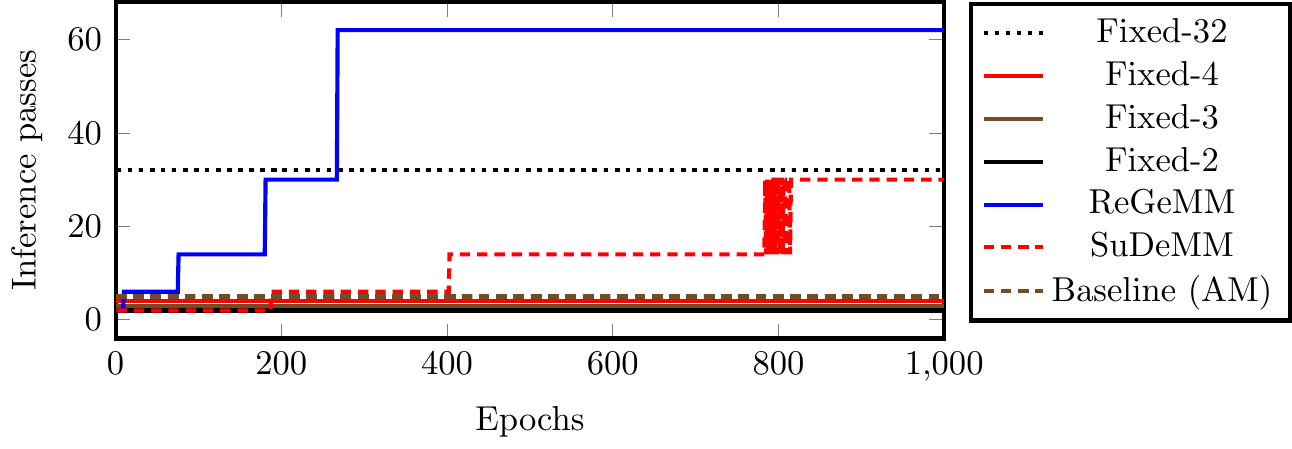}
  \caption{Objective value w.r.t.\ number of epochs (left) and the (accumulated)
    number of inference steps needed to meet the respective criterion
    (right) in the full-batch setting.}
  \label{fig:results_CHL}
\end{figure}

\paragraph{Training using contrastive learning}
Let $\{ (x_i,y_i) \}_i$ be a labeled dataset containing $N$ training samples,
and the task for the network is to predict $y_i$ from given $x_i$. The
utilized contrastive training loss is given by
\begin{align}
  J(\theta) &:= \sum\nolimits_i \left( \min_{\hat z} \hat E(\hat z; x_i, y_i, \theta) - \min_{\check z} \check E(\check z; x_i, \theta) \right) \nonumber \\
            &= \sum\nolimits_i \min_{\hat z} \max_{\check z} \left( \hat E(\hat z; x_i, y_i, \theta) - \check E(\check z; x_i, \theta) \right).
\end{align}
which is minimized w.r.t.\ the network parameters $\theta$. Using duality this
saddlepoint problem can be restated as pure minimization and maximization tasks~\cite{zach2019contrastive},
\begin{align}
  \ub{J}(\theta,(\hat z_i,\check\lambda_i)_{i=1}^N) &= \sum\nolimits_i \left( \hat E(\hat z_i; x_i, y_i, \theta) - \check E^*(\check\lambda_i; x_i, \theta) \right). \\
  \lb{J}(\theta,(\hat\lambda_i,\check z_i)_{i=1}^N) &= \sum\nolimits_i \left( \hat E^*(\hat\lambda_i; x_i, y_i, \theta) - \check E(\check z_i; x_i, \theta) \right).
\end{align}
Thus, the latent variables $\ub{u}_i=(\hat z_i, \check\lambda_i)$ and
$\lb{u}_i=(\hat\lambda_i, \check z_i)$ correspond to primal-dual pairs
representing the network activations, and therefore the entire set of latent
variables $\ub{\vu}$ is very high-dimensional. In this scenario the true cost
$J$ is not accessible, since it requires solving a inner minimization problem
w.r.t.\ $\ub{u}_i$ not having a closed form solution (it requires solving a
convex QP). Inference (minimization) w.r.t.\ $\ub{u}_i$ is conducted by
coordinate descent, which is guaranteed to converge to a global solution as
both $\hat E$ and $-\check E^*$ are strongly
convex~\cite{luo1992convergence,wright2015coordinate}.

\paragraph{Full batch methods}
In Fig.~\ref{fig:results_CHL} we illustrate the evolution of $J$ on a subset
of MNIST~\cite{lecun1998gradient} using a fully connected
784-64($\times 4$)-10 architecture for 4 methods: (i) inferring
$\ub{u}_i$ with a fixed number of 2, 3, 4 and 32 passes of an iterative
method, respectively, (ii) using the ReGeMM condition Eq.~\ref{eq:relaxed_MM},
and (iii) using the SuDeMM criterion Eq.~\ref{eq:gradient_condition}.
Inference for $\ub{u}_i$ is continued until the respective criterion is
met. Both ReGeMM and SuDeMM use the respective constant memory variants.  In
this scenario 32 passes are considered sufficient to perform inference, and
the ReGeMM and SuDeMM methods track the best curve well. We chose to use the
number of epochs (i.e.\ the number of updates of $\theta$) on the
$x$-axis to align the curves. Clearly, using a fixed number of 2 passes is
significantly faster than using 32 or an adaptive but growing number of
inference steps. Interestingly, the necessary inference steps grow much
quicker (to the allowed maximum of~40 passes) for the ReGeMM condition
compared to the SuDeMM test. The baseline method alternates between gradient
updates w.r.t.\ $\ub{\vu}$ (using line search) and $\theta$. In all methods
the gradient update for $\theta$ uses the same fixed learning rate.

\paragraph{Stochastic methods}

\def\graphicswidth{0.49\textwidth}
\begin{figure}[t]
  \centering
  \includegraphics[width=\graphicswidth]{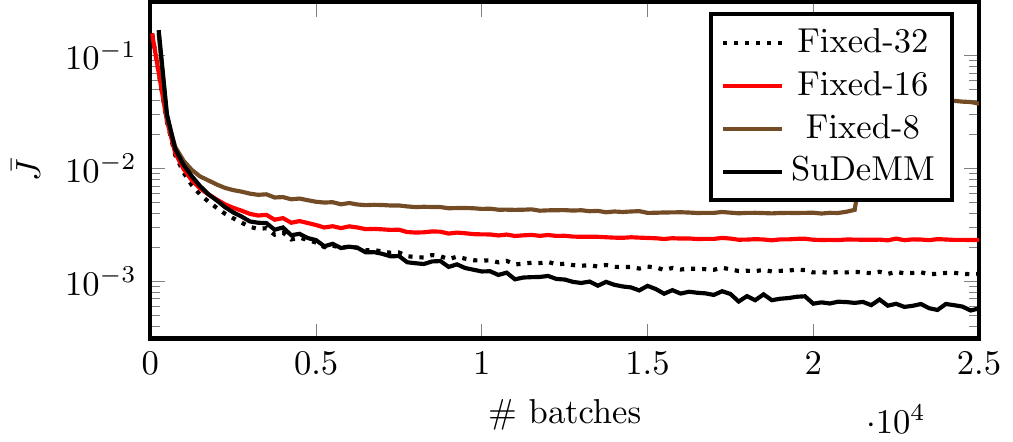}~
  \includegraphics[width=\graphicswidth]{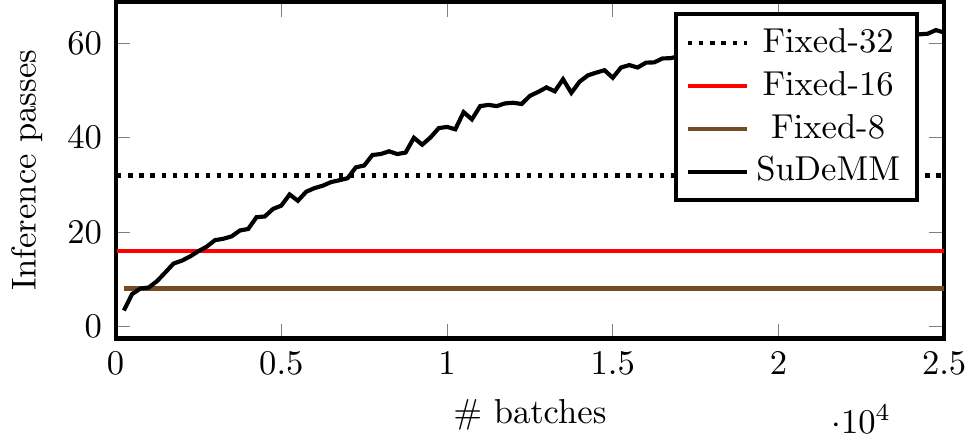}

  \includegraphics[width=\graphicswidth]{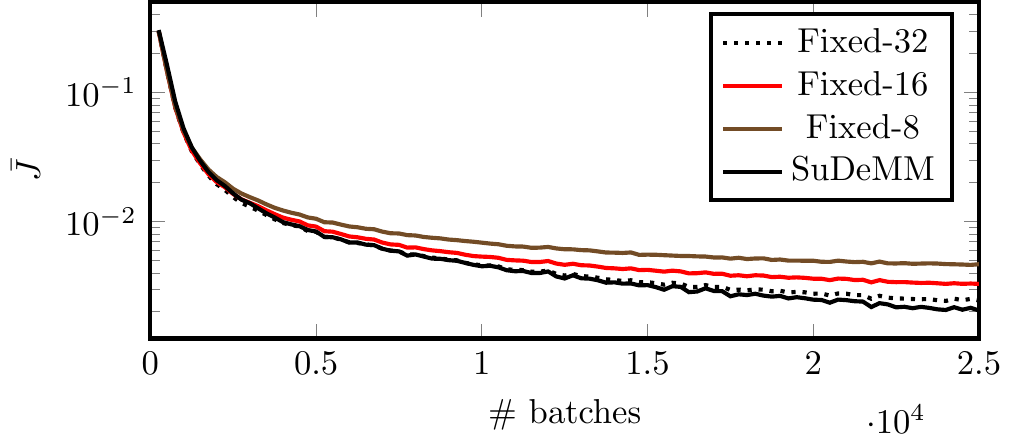}~
  \includegraphics[width=\graphicswidth]{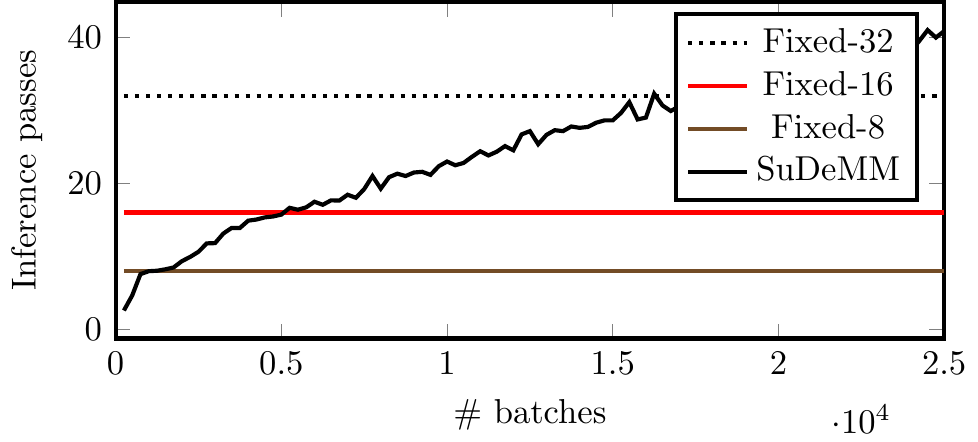}

  \includegraphics[width=\graphicswidth]{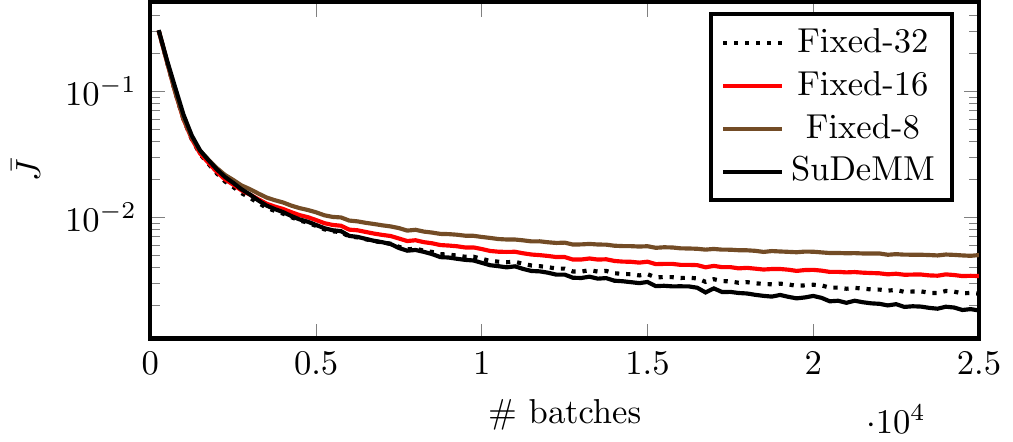}~
  \includegraphics[width=\graphicswidth]{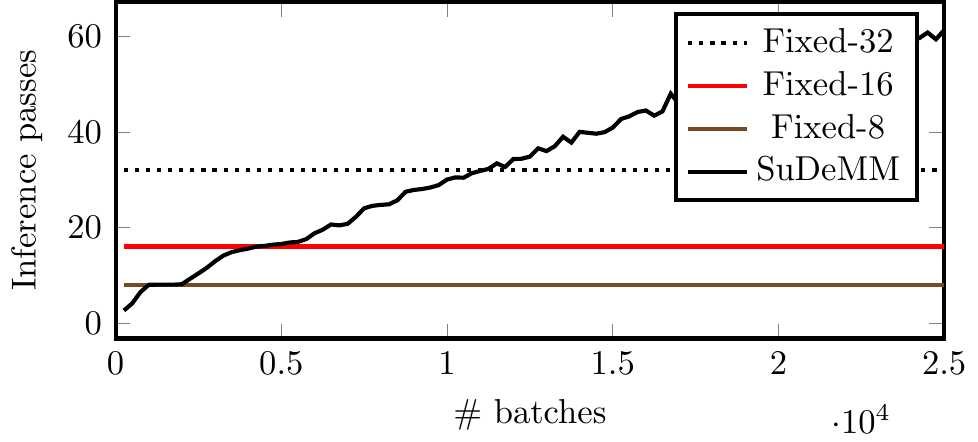}

  \includegraphics[width=\graphicswidth]{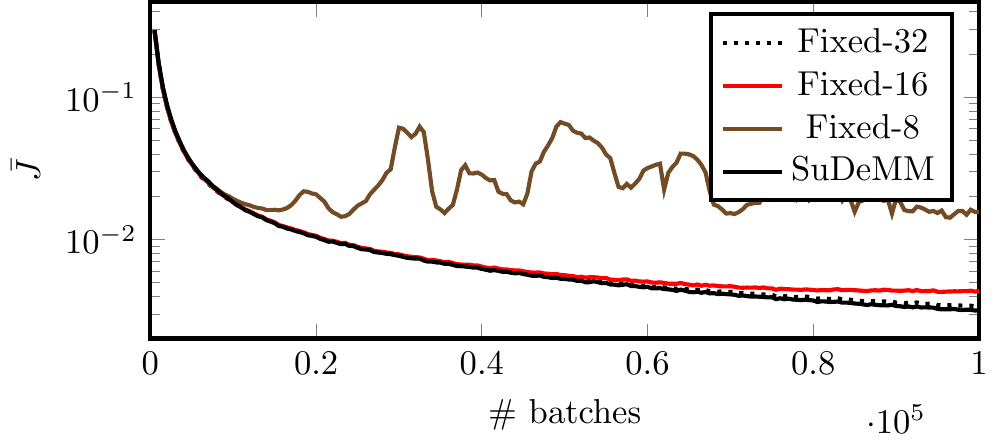}~
  \includegraphics[width=\graphicswidth]{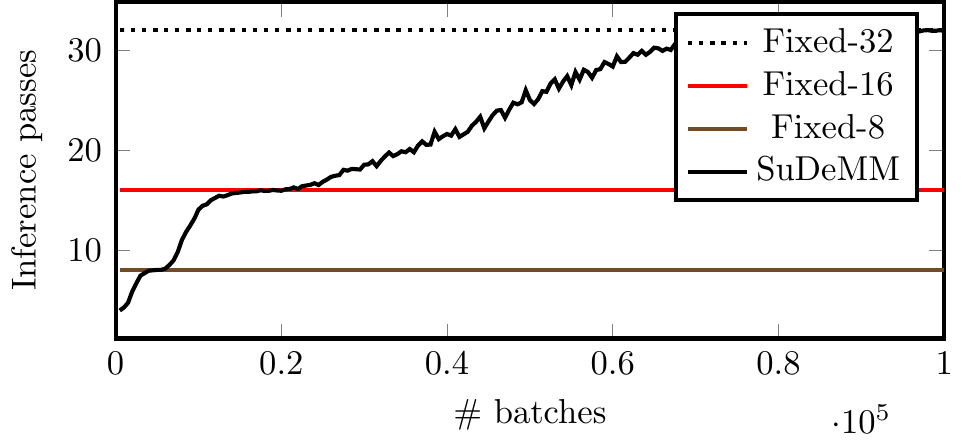}

  \caption{Objective value w.r.t.\ number of processed mini-batches (left column) and the
    number of inference steps needed to meet the respective criterion (right
    column) for MNIST (1st row), Fashion-MNIST (2nd row), KMNIST (3rd row) and
    CIFAR-10 (bottom row) using the stochastic gradient method.}
  \label{fig:results_mnist_SGD}
\end{figure}

For the stochastic method in Alg.~\ref{alg:SuDeMM_stochastic} we illustrate
the evolution of the objectives values $\ub{J}$ and the number of inference
passes in Fig.~\ref{fig:results_mnist_SGD}. For MNIST and its drop-in
replacements Fashion-MNIST~\cite{xiao2017fashion_mnist} and
KMNIST~\cite{clanuwat2018deep} we again use the same
784-64($\times 4$)-10 architecture as above. For a greyscale version of
CIFAR-10~\cite{krizhevsky2009learning} we employ a
1024-128($\times 3$)-10 network. The batch size is 10, and a constant step
size is employed. The overall conclusions from
Fig.~\ref{fig:results_mnist_SGD} are as follows: using an insufficient number
of inference passes yields poor surrogates for the true objective $J$ and it
can lead to numerical instabilities due to the biasedness of the gradient
estimates. Further, the proposed SuDeMM algorithm yields the lowest estimates
for the true objective by gradually adapting the necessary inference
precision.

\section{Conclusion}

We present two approaches to optimize problems with a latent variable
structure. Our formally justified methods (i) enable inexact (or truncated)
minimization over the latent variables and (ii) allow to discard the latent
variables between updates of the main parameters of interest. Hence, the
proposed methods significantly reduce the memory consumption, and
automatically adjust the necessary precision for latent variable inference.
One of the two presented methods can be adapted to return competitive
solutions for highly non-convex problems such as large-scale robust
estimation, and the second method can be run in a stochastic optimization
setting in order to address machine learning tasks.

In the future we plan to better understand how turning the proposed ReGeMM
inequality condition essentially into an equality constraint can help with
solving highly non-convex optimization problems. Further, the presented SuDeMM
method enables us to better explore a variety of convex energy-based models in
the future.

\appendix

\section{Relaxed Generalized MM and the gradient method}

We assume that $\ub{J}(\cdot;\ub{u})$ has a Lipschitz gradient with constant
$L$, and therefore
\begin{align}
  \ub{J}(\theta; \ub{u}) &\le \ub{J}(\theta^0; \ub{u}) + \nabla\ub{J}(\theta^0; \ub{u})^T(\theta - \theta^0) + \frac{L}{2} \norm{\theta-\theta^0}^2
\end{align}
for all $\ub{u}$, $\theta$ and $\theta^0$. We recall the relaxed GMM
condition,
\begin{align}
  \ub{J}(\theta^{(t-1)}; \ub{u}^{(t)}) &\le \eta \lb{J}(\theta^{(t-1)}, \lb{u}^{(t)})
  + (1-\eta) \ub{J}(\theta^{(t-2)}; \ub{u}^{(t-1)}) \nonumber \\
  &= \ub{J}(\theta^{(t-2)}; \ub{u}^{(t-1)})
  - \eta \underbrace{\left( \ub{J}(\theta^{(t-2)}, \ub{u}^{(t-1)}) - \lb{J}(\theta^{(t-1)}, \lb{u}^{(t)}) \right)}_{=: c_t} \nonumber.
\end{align}
If $\theta^{(t-1)} = \theta^{(t-2)} - \frac{1}{L} \nabla\ub{J}(\theta^{(t-2)}; \ub{u}^{(t-1)})$, then
\begin{align}
  \ub{J}(\theta^{(t-1)}; \ub{u}^{(t-1)}) &\le \ub{J}(\theta^{(t-2)}; \ub{u}^{(t-1)})
  + \nabla\ub{J}(\theta^{(t-2)}; \ub{u}^{(t-1)})^T(\theta^{(t-1)} - \theta^{(t-2)}) \nonumber \\
  &+ \frac{L}{2} \norm{\theta^{(t-1)}-\theta^{(t-1)}}^2 \nonumber\\
  &= \ub{J}(\theta^{(t-2)}; \ub{u}^{(t-1)}) - \frac{1}{2L} \norm{\nabla\ub{J}(\theta^{(t-2)}; \ub{u}^{(t-1)})}^2.
  \label{eq:lipschitz}
\end{align}
Consequently,
\begin{align}
  c_t &= \ub{J}(\theta^{(t-2)}, \ub{u}^{(t-1)}) - \lb{J}(\theta^{(t-1)}, \lb{u}^{(t)}) \nonumber \\
  &\ge \ub{J}(\theta^{(t-1)}; \ub{u}^{(t-1)}) + \frac{1}{2L} \norm{\nabla\ub{J}(\theta^{(t-2)}; \ub{u}^{(t-1)})}^2
  - \lb{J}(\theta^{(t-1)}, \lb{u}^{(t)}) \nonumber \\
  &\ge J(\theta^{(t-1)}) - \lb{J}(\theta^{(t-1)}, \lb{u}^{(t)}) + \frac{1}{2L} \norm{\nabla\ub{J}(\theta^{(t-2)}; \ub{u}^{(t-1)})}^2 \nonumber \\
  &= \frac{1}{2L} \norm{\nabla\ub{J}(\theta^{(t-2)}; \ub{u}^{(t-1)})}^2 \ge 0,
\end{align}
where we first used Eq.~\ref{eq:lipschitz} and then the lower and upper bounds
on $J$. From Proposition~1 we know that $c_t\to 0$, which implies that
$\norm{\nabla\ub{J}(\theta^{(t-2)}; \ub{u}^{(t-1)})}^2/(2L) \le c_t \to
0$. Hence, limit points of $(\theta^{(t)})_{t=1}^T$ are stationary points of
$J$.

\begin{remark}
  $\theta^{(t)}$ is not necessarily induced by a gradient step, but a new
  iterate $\theta^{(t)}$ has to satisfy a sufficient descent condition,
  \begin{align}
    \ub{J}(\theta^{(t)}; \ub{u}^{(t)}) &\le \ub{J}(\theta^{(t-1)}; \ub{u}^{(t)}) - \frac{\kappa\norm{\nabla\ub{J}(\theta^{(t-1)}; \ub{u}^{(t)})}^2}{2L} \nonumber.
  \end{align}
  for a factor $\kappa\in(0,1)$. In this setting we obtain analogously
  \begin{align}
    c_t \ge \frac{\kappa}{2L} \norm{\nabla\ub{J}(\theta^{(t-2)}; \ub{u}^{(t-1)})}^2 \ge 0,
  \end{align}
  leading to the same conclusion.
\end{remark}

\section{ReGeMM using constant memory}

\begin{algorithm}[tb]
  \begin{algorithmic}[1]
    \Require Initial $\theta^{(0)}=\theta^{(-1)}$ and $\ub{u}^{(0)}$; number of rounds $T$; $R_{\min} \ge 1$
    \For{$t=1,\dotsc,T$}
    \State $R \gets R_{\min}$
    \Repeat
    \State $J_0\gets 0$, $J_1 \gets 0$, $g \gets 0$
    \For{$i=1,\dotsc,N$}
    \State Set
    \begin{align}
      \ub{v} &\gets R\text{-step-arg-min}_{\ub{v}} \ub{J}_i(\theta^{(t-1)}, \ub{v}) & \lb{v} &\gets R\text{-step-arg-max}_{\lb{v}} \lb{J}_i(\theta^{(t-1)}, \lb{v}) \nonumber \\
      J_0 &\gets J_0 + \ub{J}_i(\theta^{(t-1)}, \ub{v}) & J_1 &\gets J_1 + \lb{J}_i(\theta^{(t-1)}, \lb{v}) \nonumber \\
      g &\gets g + \nabla_\theta \ub{J}_i(\theta^{(t-1)}, \ub{v})
    \end{align}
    \EndFor
    \State $R\gets 2R$
    \Until{$J_0 \le \eta J_1 + (1-\eta) J_2$} \Comment{Test for the ReGeMM condition} %\Comment{Test for Eq.~\ref{eq:relaxed_MM}}
    \State Set $\theta^{(t)} \gets \theta^{(t-1)} - \frac{1}{L} g$
    \EndFor
    \State \Return $\theta^{(T)}$
  \end{algorithmic}
  \caption{Relaxed Majorization-Minimization: constant memory version}
  \label{alg:relaxed_MM_constant_memory}
\end{algorithm}

As pointed out in the main text, naive implementations of the ReGeMM algorithm
(Alg.~1 in the main text) require $O(N)$ memory to store
$\ub{\vu}=(\ub{u}_1,\dotsc,\ub{u}_N)$. In many applications the number of
terms $N$ is large, but the latent variables $(\ub{u}_i)_i$ have the same
structure for all $i$ (e.g.\ $\ub{u}_i$ represent pixel-level predictions for
training images of the same dimensions). If we use a gradient method to update
$\theta$, then the required quantities can be accumulated in-place, in
Alg.~\ref{alg:relaxed_MM_constant_memory}.
The constant memory algorithm is not limited to first order methods for
$\theta$, but any method that accumulates the information needed to determine
$\theta^{(t)}$ from $\theta^{(t-1)}$ in-place is feasible (such as the Newton
or the Gauss-Newton method).

The latent variables $\ub{v}$ and $\lb{v}$ are reused for all terms $i$, and
inference for $\ub{v}$ and $\lb{v}$ is conducted using $R$ steps of suitable
iterative inference methods, that are monotonically decreasing and increasing,
respectively. Inference is started from constant inital values for $\ub{v}$
and $\lb{v}$. Suitable choices such for inference methods include gradient
methods and (block) coordinate descent (if the mappings $\ub{\vu}\mapsto
\ub{J}(\theta, \ub{\vu})$ and $\lb{\vu}\mapsto \lb{J}(\theta, \lb{\vu})$ are
strictly convex and concave, respectively, for all $\theta$). If the ReGeMM condition
(Eq.~\ref{eq:relaxed_MM})
is not satisfied, then repeated doubling of $R$ ensures, that the total number
of steps spent for inference is at most four times the minimally required
number of steps.\footnote{In the worst case that the mininum number of steps
  required to meet the relaxed MM condition is $2^M+1$ for some
  $M\in\mathbb{N}$, hence the doubling approach will need
  $1+2+\cdots+2^{M+1} \approx 2^{M+2}$ total inference steps. }

The constant memory algorithm is not limited to first order methods to update
$\theta$, but any method that accumulates the information needed to determine
$\theta^{(t)}$ from $\theta^{(t-1)}$ in-place is feasible (such as the Newton
or the Gauss-Newton method).

\section{Analysis of stochastic SuDeMM}

We use the following assumptions:
\begin{enumerate}
\item $\ub{J}_i(\cdot, \ub{u})$ has Lipschitz gradient with constant
  $L$ for all $i$ and $\ub{u}$. This implies that $J$ has Lipschitz gradient
  as well.
\item All iterates $\theta^{(t)}$, $t\in\mathbb{N}$, are bounded. Together
  with the Lipschitz gradient assumption this means, that the sequence of
  gradients $(g_t)_{t=1}^\infty$ is contained in a bounded set.
\end{enumerate}

We partially follow~\cite{bottou2018optimization}. Let
$g_t := \nabla \ub{J}_{i_t}(\theta^{(t-1)}, \ub{u}_{i_t}^{(t)})$ and therefore
$\theta^{(t)} = \theta^{(t-1)} - \alpha_t g_t$. Thus, we have
\begin{align}
  \ub{J}(\theta^{(t)}, \ub{u}^{(t)}) &= \ub{J}(\theta^{(t-1)} - \alpha_t g_t, \ub{u}^{(t)}) \nonumber \\
  {} &\le \ub{J}(\theta^{(t-1)}, \ub{u}^{(t)}) - \alpha_t \nabla_\theta \ub{J}(\theta^{(t-1)}, \ub{u}^{(t)})^T g_t + \frac{L\alpha_t^2}{2} \norm{g_t}^2.
       \label{eq:Jt_reduction}
\end{align}
The SuDeMM condition implies that
\begin{align}
  \ub{J}_{i_t}(\theta^{(t-1)}, \ub{u}^{(t)}_{i_t}) &\le J_{i_t}(\theta^{(t-1)}) + \frac{\rho_t}{2} \norm{g_t}^2.
\end{align}
By taking the expectation on both sides of this relation we consequently obtain
\begin{align}
  \EE{}{\ub{J}(\theta^{(t-1)}, \ub{u}^{(t)})} &\le J(\theta^{(t-1)}) + \EE{}{\frac{\rho_t}{2} \norm{g_t}^2} \nonumber \\
  {} &\le \ub{J}(\theta^{(t-1)},\ub{u}^{(t-1)}) + \EE{}{\frac{\rho_t}{2} \norm{g_t}^2}.
\end{align}
Combining this with Eq.~\ref{eq:Jt_reduction} yields
\begin{align}
  \EE{}{\ub{J}(\theta^{(t)}, \ub{u}^{(t)})} \le \ub{J}(\theta^{(t-1)},\ub{u}^{(t-1)})
  + \EE{}{\frac{L\alpha_t^2+\rho_t}{2} \norm{g_t}^2 - \alpha_t \nabla_\theta \ub{J}(\theta^{(t-1)}, \ub{u}^{(t)})^T g_t} \nonumber \\
  = \ub{J}(\theta^{(t-1)},\ub{u}^{(t-1)}) + \EE{}{\frac{L\alpha_t^2+\rho_t}{2} \norm{g_t}^2} - \alpha_t \norm{\nabla_\theta \ub{J}(\theta^{(t-1)}, \ub{u}^{(t)})}^2,
  \label{eq:Jt_reduction2}
\end{align}
since $\EE{}{g_t}=\nabla_\theta \ub{J}(\theta^{(t-1)}, \ub{u}^{(t)})$. We
consider the telescopic sum and obtain
\begin{align}
  \EE{}{\ub{J}(\theta^{(T)}, \ub{u}^{(T)})} &- \ub{J}(\theta^{(0)}, \ub{u}^{(0)}) = \EE{}{\sum_{t=1}^T \left( \ub{J}(\theta^{(t)}, \ub{u}^{(t)}) - \ub{J}(\theta^{(t-1)}, \ub{u}^{(t-1)}) \right)} \nonumber\\
  {} &\le \sum_{t=1}^T \left( - \alpha_t \norm{\nabla_\theta \ub{J}(\theta^{(t-1)}, \ub{u}^{(t)})}^2 + \EE{}{\frac{L\alpha_t^2+\rho_t}{2} \norm{g_t}^2} \right) \nonumber
\end{align}
or
\begin{align}
  \EE{}{\sum_{t=1}^T \alpha_t \norm{\nabla_\theta \ub{J}(\theta^{(t-1)}, \ub{u}^{(t)})}^2} &\le \ub{J}(\theta^{(0)}, \ub{u}^{(0)}) - \EE{}{\ub{J}(\theta^{(T)}, \ub{u}^{(T)})}
                                                                                             + \sum_{t=1}^T \EE{}{\frac{L\alpha_t^2+\rho_t}{2} \norm{g_t}^2}.
\end{align}
The r.h.s.\ is finite by our assumptions. With $\sum_{t=1}^\infty\alpha_t =
\infty$ this implies that $\lim\inf_{t\to\infty}\EE{}{\norm{\nabla_\theta
    \ub{J}(\theta^{(t-1)}, \ub{u}^{(t)})}^2} = 0$. With the following simple
lemma we can show something stronger.

\begin{lemma}
  Let $X_t$ be a stochastic process adapted to the filtration $(\cF_t)_t$
  satisfying $\EE{}{X_t-\delta_t|\cF_{t-1}} \le X_{t-1}$ for all
  $t\in\mathbb{N}$. Then $Y_t := X_t - \sum_{r=1}^t\delta_r$ is a
  supermartingale.
\end{lemma}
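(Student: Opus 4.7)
The plan is to verify the defining supermartingale properties of $Y_t$ directly from the hypothesis, which is essentially a bookkeeping exercise. First I would observe that $Y_t$ is $\cF_t$-adapted, because $X_t$ is adapted by assumption and each $\delta_r$ for $r\le t$ must be $\cF_t$-measurable for the conditional expectation $\EE{}{X_t-\delta_t|\cF_{t-1}}$ in the hypothesis to be well-posed in the first place (we take integrability of $X_t$ and $\delta_t$ as an implicit regularity assumption, standard for martingale statements).

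Next I would compute the one-step conditional expectation of $Y_t$. Writing $Y_t = X_t - \delta_t - \sum_{r=1}^{t-1}\delta_r$ and pulling the $\cF_{t-1}$-measurable tail sum out of the conditional expectation gives
\begin{align}
\EE{}{Y_t \mid \cF_{t-1}} = \EE{}{X_t - \delta_t \mid \cF_{t-1}} - \sum_{r=1}^{t-1}\delta_r \le X_{t-1} - \sum_{r=1}^{t-1}\delta_r = Y_{t-1}, \nonumber
\end{align}
where the inequality is precisely the hypothesis $\EE{}{X_t-\delta_t|\cF_{t-1}}\le X_{t-1}$. This establishes the supermartingale inequality and completes the proof.

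There is no real obstacle here: the only subtle point is the measurability of the $\delta_r$ terms with respect to $\cF_{t-1}$, which we are tacitly assuming by virtue of the filtration structure (otherwise the shifted process $Y_t$ would not even be adapted). In the application to stochastic SuDeMM, the $\delta_t$ will be the per-step noise contribution $\tfrac{L\alpha_t^2+\rho_t}{2}\norm{g_t}^2$ appearing in Eq.~\ref{eq:Jt_reduction2}, which is indeed $\cF_t$-measurable, so the lemma will apply and deliver almost-sure convergence of the associated supermartingale via the standard martingale convergence theorem.
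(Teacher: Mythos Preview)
Your proof is correct and follows exactly the same one-line computation as the paper: split off $\delta_t$ from the sum, pull the $\cF_{t-1}$-measurable tail out of the conditional expectation, and apply the hypothesis. One small inaccuracy in your closing remark about the application: in the paper's use of the lemma, $\delta_t$ is the full expression $\tfrac{L\alpha_t^2+\rho_t}{2}\norm{g_t}^2 - \alpha_t\norm{\nabla_\theta \ub{J}(\theta^{(t-1)},\ub{u}^{(t)})}^2$, not just the first term.
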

\begin{proof}
  We have
  \begin{align}
    \EE{}{Y_t|\cF_{t-1}} &= \EE{}{X_t - \sum_{r=1}^t \delta_r |\cF_{t-1}} = \EE{}{X_t-\delta_t |\cF_{t-1}} - \sum_{r=1}^{t-1} \delta_r
                           \le X_{t-1} - \sum_{r=1}^{t-1} \delta_r = Y_{t-1} \nonumber,
  \end{align}
  hence $Y_t$ is a supermartingale.
\end{proof}
We choose $X_t := \ub{J}(\theta^{(t-1)}, \ub{u}^{(t)})$ and
\begin{align}
  \delta_t := \frac{L\alpha_t^2+\rho_t}{2} \norm{g_t}^2 - \alpha_t \norm{\nabla_\theta \ub{J}(\theta^{(t-1)}, \ub{u}^{(t)})}^2.
\end{align}
Using Eq.~\ref{eq:Jt_reduction2} and the above lemma the stochastic process
\begin{align}
  Y_t &:= \ub{J}(\theta^{(t-1)}, \ub{u}^{(t)}) - \sum_{r=1}^t \frac{L\alpha_r^2+\rho_r}{2} \norm{g_r}^2
  + \sum_{r=1}^t \alpha_r \norm{\nabla_\theta \ub{J}(\theta^{(r-1)}, \ub{u}^{(r)})}^2
\end{align}
is a supermartingale. By our assumptions on the sequences $(\alpha_t)_t$,
$(\rho_t)_t$ and $(g_t)_t$, the sum in the middle is bounded. Further, the
first term and the last sum are non-negative. Hence, $Y_t$ is also bounded
from below, and via the supermartingale convergence theorem $Y_t \to
Y_{\infty} < \infty$ a.s. Consequently, the boundedness of
$(\ub{J}(\theta^{(t-1)}, \ub{u}^{(t)}))_t$ and $\sum_{r=1}^\infty
\frac{L\alpha_r^2+\rho_r}{2} \norm{g_r}^2$ implies that
\begin{align}
  \sum_{t=1}^\infty \alpha_t \norm{\nabla_\theta \ub{J}(\theta^{(t-1)}, \ub{u}^{(t)})}^2 < \infty \qquad\text{a.s.}
\end{align}
With $\sum_{t=1}^\infty \alpha_t=\infty$ we deduce that
$\norm{\nabla_\theta \ub{J}(\theta^{(t-1)}, \ub{u}^{(t)})}^2\to 0$ (and
therefore $\nabla_\theta \ub{J}(\theta^{(t-1)}, \ub{u}^{(t)})\to 0$)
a.s. Thus, with probability~1 (w.r.t.\ the sequence of sampled indices
$(i_t)_{t=1}^\infty$) every accumulation point of
$(\theta^{(t)})_{t=1}^\infty$ is a stationary point.

\small
\bibliographystyle{plain}
\bibliography{literature}

\end{document}